\documentclass{article}

\usepackage[preprint]{neurips_2026}

\usepackage[utf8]{inputenc} 
\usepackage[T1]{fontenc}    
\usepackage{hyperref}       
\usepackage{url}            
\usepackage{booktabs}       
\usepackage{amsfonts}       
\usepackage{amsmath}
\usepackage{amssymb}
\usepackage{amsthm}
\usepackage{thmtools}
\usepackage{thm-restate}
\usepackage[nameinlink,noabbrev]{cleveref}
\usepackage{nicefrac}       
\usepackage{microtype}      
\usepackage{xcolor}         
\usepackage{graphicx}       
\usepackage{subcaption}     
\usepackage{xspace}
\newcommand{\adim}{d_a}
\newcommand{\edyn}{\varepsilon_{\mathrm{dyn}}}
\newcommand{\erew}{\varepsilon_{\mathrm{rew}}}
\newcommand{\Ndyn}{N_{\mathrm{dyn}}}
\newcommand{\Nrew}{N_{\mathrm{rew}}}
\newcommand{\cdyn}{c_{\mathrm{dyn}}}
\newcommand{\crew}{c_{\mathrm{rew}}}
\newcommand{\Lcomp}{L_{\mathrm{comp}}}
\newcommand{\splitcoef}{\frac{\gamma L_r(1+L_\pi)}{(1-\gamma)(1-\gamma L_f(1+L_\pi))}}
\newcommand{\mdp}{\textup{\textsc{mdp}}\xspace}
\newcommand{\mdps}{\textup{\textsc{mdp}}s\xspace}
\newcommand{\reinforce}{\textup{\textsc{reinforce}}\xspace}
\crefname{equation}{Equation}{Equations}
\Crefname{equation}{Equation}{Equations}
\crefname{theorem}{Theorem}{Theorems}
\Crefname{theorem}{Theorem}{Theorems}
\crefname{lemma}{Lemma}{Lemmas}
\Crefname{lemma}{Lemma}{Lemmas}
\crefname{proposition}{Proposition}{Propositions}
\Crefname{proposition}{Proposition}{Propositions}
\crefname{corollary}{Corollary}{Corollaries}
\Crefname{corollary}{Corollary}{Corollaries}
\crefname{definition}{Definition}{Definitions}
\Crefname{definition}{Definition}{Definitions}
\crefname{section}{Section}{Sections}
\Crefname{section}{Section}{Sections}

\newtheorem{corollary}{Corollary}
\theoremstyle{definition}
\newtheorem{definition}{Definition}
\theoremstyle{plain}

\title{On Training in Imagination}

\author{%
  Nadav Timor\thanks{Corresponding author: \texttt{nadav.timor@weizmann.ac.il}.} \\
  Weizmann Institute of Science
  \And
  Ravid Shwartz-Ziv \\
  New York University \\
  \And
  Micah Goldblum \\
  Columbia University
  \And
  Yann LeCun \\
  New York University \\
  AMI Labs
  \And
  David Harel \\
  Weizmann Institute of Science
}

\makeatletter
\let\MT@suspend@tagging\relax
\makeatother

\begin{document}

\maketitle

\begin{abstract}
State-of-the-art model-based reinforcement learning methods train policies on imagined rollouts. These rollouts are trajectories generated by a learned dynamics model and are scored by a learned reward model, but without querying the true environment during policy updates. We study this training paradigm by quantifying how errors in learned dynamics and reward models affect returns and policy optimization. First, we extend the analysis of \citet{asadi2018lipschitz} to MDPs with learned reward models, and derive the optimal sample allocation—the ratio of dynamics samples to reward samples that minimizes a bound on return error under power-law scaling assumptions. We identify lower Lipschitz constants of the learned dynamics, reward, and policy as a representation desideratum that tightens this bound, and we connect this perspective to the temporal-straightening objective of \citet{wang2026temporal}. Second, we examine how policy optimization with \reinforce tolerates noisy rewards, which are often cheaper to obtain. We show that zero-mean reward noise leaves the gradient estimator unbiased and adds at most a variance term that decreases with the number of rollouts. This introduces a practical tradeoff: given a fixed budget, should one buy more rollouts with cheaper but noisier rewards, or fewer rollouts with more expensive but less noisy rewards? We reduce this choice to a one-dimensional optimization problem and characterize the optimum.
\end{abstract}



\section{Introduction}

In \emph{training in imagination}, the policy is trained on rollouts generated by a learned dynamics model and scored by a learned reward model, with no environment interaction during the policy update step itself. Recent state-of-the-art instantiations include Dreamer~3 \citep{hafner2025mastering}, trained across diverse control tasks with a single configuration, and Dreamer~4 \citep{hafner2025training}, which extends the paradigm to long-horizon offline control. \citet{schrittwieser2020mastering} earlier instantiate a closely related paradigm in board games and Atari, learning dynamics, reward, and value jointly.

These recent results provide strong empirical evidence that training in imagination can be effective on challenging control tasks. Existing simulation-lemma-style bounds \citep{kearns2002near, asadi2018lipschitz}, however, do not assign independently controllable coefficients to dynamics-model and reward-model error, nor do they say how a sample budget should be split between dynamics samples and the typically more expensive reward samples (e.g., human preference labels in reinforcement learning from human feedback, or expert evaluation in robotics). The optimal trade-off between the two has not been theoretically characterized.

Four questions about this paradigm remain open. The first is error attribution: how much of the return gap comes from dynamics-model error versus reward-model error? The second is representation properties: what properties of the learned representations and the maps acting on them tighten the return-error bound? The third is budget allocation: given a fixed sample budget, how should it be split between dynamics transitions and reward annotations? The fourth is reward fidelity: how does \reinforce tolerate noisy or biased reward annotations, and when is it preferable to buy many cheap noisy annotations rather than fewer accurate ones?

\paragraph{Our contribution.}
This paper treats the learned reward model as a separate source of error with its own sample budget, distinct from the learned dynamics, and quantifies the resulting attribution, allocation, and noise-tolerance trade-offs under Lipschitz and power-law assumptions.
\begin{enumerate}
  \item \textbf{Error attribution.} \Cref{lem:decomposed} extends \citet{asadi2018lipschitz} by replacing the assumed ground-truth reward with a learned reward model, and gives a Lipschitz-based decomposition of the return gap with separable, independently controllable dynamics-error and reward-error coefficients.
  \item \textbf{Representation desiderata.} \Cref{cor:lipschitz_tightens_bound} shows that the dynamics-error coefficient in \cref{eq:decomposed_bound} is monotone non-decreasing in the Lipschitz constants $L_f, L_r, L_\pi$ of the learned dynamics, reward, and policy, identifying lower Lipschitz constants of the learned models as a representation desideratum. \Cref{prop:temporal_straightening} couples this perspective to the temporal-straightening objective of \citet{wang2026temporal}, upper-bounding its curvature loss by a function of the Lipschitz constant of the latent velocity map.
  \item \textbf{Budget allocation.} \Cref{thm:optimal_split}, under power-law error scaling for the dynamics and reward errors, gives a closed-form expression for the optimal ratio of dynamics samples to reward samples in terms of the power-law exponents, the per-sample costs, and the Lipschitz coefficient inherited from \cref{lem:decomposed}.
  \item \textbf{Reward fidelity.} \Cref{thm:noise} shows that the multi-trajectory \reinforce estimator under additive zero-mean reward noise is unbiased with bounded variance inflation; \Cref{cor:fidelity} reduces the optimal-fidelity allocation problem to a one-dimensional minimization in the per-rollout annotation cost; and \Cref{prop:biased_reward} formalizes systematic reward bias as a gradient bias that trajectory averaging cannot remove.
\end{enumerate}
Empirical evaluations of the assumptions and predictions of these results appear in \Cref{sec:lemma_calibration,sec:error_scaling,sec:allocation_evaluation}.

\paragraph{Notation.}
We write $\mathcal{M} = (\mathcal{S}, \mathcal{A}, f, r, \gamma)$ for a Markov decision process (\mdp) with state space $\mathcal{S} \subseteq \mathbb{R}^{d_s}$, action space $\mathcal{A} \subseteq \mathbb{R}^{\adim}$, deterministic dynamics $f$, reward $r$, and discount $\gamma \in [0,1)$.
Starting from an initial state $s_0$, a policy $\pi$ generates a trajectory by
$a_t = \pi(s_t)$ and $s_{t+1} = f(s_t, a_t)$.
We write $J(\pi, \mathcal{M}) := \sum_{t=0}^{\infty} \gamma^t r(s_t, a_t)$ for the discounted return of $\pi$ in $\mathcal{M}$.
Hats denote estimated quantities. In particular, $\hat{f}$ is the learned dynamics, $\hat{r}$ the learned reward, and $\hat{\mathcal{M}} = (\mathcal{S}, \mathcal{A}, \hat{f}, \hat{r}, \gamma)$ is the \mdp obtained by replacing $f$ and $r$ with $\hat{f}$ and $\hat{r}$.
$\edyn := \sup_{s,a} \|\hat{f}(s,a) - f(s,a)\|$ and $\erew := \sup_{s,a} |\hat{r}(s,a) - r(s,a)|$ are the worst-case model errors.
Throughout, $\|\cdot\|$ denotes the Euclidean norm.

\section{Related work}
\label{sec:related_work}

Training a policy on rollouts from a learned model of the environment dates back to the Dyna architecture of \citet{sutton1990integrated}, which interleaves real-environment transitions with updates on imagined transitions drawn from a learned dynamics model. Model-based policy optimization \citep{janner2019trust} uses short imagined rollouts from an ensemble dynamics model to augment policy updates. More recently, \citet{hafner2025mastering} train across diverse control tasks with a single learned world model and reward predictor, and \citet{hafner2025training} train agents inside scalable learned world models, calling this process ``training in imagination''. These works treat the learned dynamics model and the learned reward predictor as a single coupled object and tune it empirically; \cref{app:additional_related_work} reviews the broader latent-world-model lineage. Unlike the training-in-imagination lineage of \citet{hafner2025training}, this paper decomposes return error into separate, independently controllable dynamics-model and reward-model terms (\cref{lem:decomposed}), derives a closed-form split of a single sample budget between dynamics transitions and reward annotations (\cref{thm:optimal_split}), and characterizes the policy gradient inside this paradigm under both zero-mean noise and bias in the learned reward (\cref{thm:noise,prop:biased_reward}).

Simulation return-error bounds have a long history in reinforcement learning theory, beginning with the simulation lemma of \citet{kearns2002near}, which bounds the value gap between a true and approximate Markov decision process in terms of one-step transition and reward errors. Closest to our setting, \citet{asadi2018lipschitz} bound multi-step prediction error in model-based reinforcement learning under Lipschitz assumptions on the dynamics and policy, but they assume access to the ground-truth reward, so reward-model error never enters their bound. \Cref{app:additional_related_work} surveys subsequent refinements of these bounds and value-aware model learning. Unlike \citet{asadi2018lipschitz}, \cref{lem:decomposed} carries a learned reward model through the analysis and produces an explicit reward-error term alongside the dynamics-error term with independently controllable coefficients, which is the structural ingredient that makes a budget split between dynamics and reward data well-posed.

Allocating sample budget between dynamics samples and reward samples sits at the intersection of reward-aware data collection and neural scaling laws. Reward-free exploration separates a reward-agnostic data-collection phase from later reward-conditioned planning, with sample-complexity guarantees that hold uniformly over downstream reward functions \citep{jin2020reward}. Neural scaling laws fit power-law decays of loss in data and parameters \citep{kaplan2020scaling} and, in the compute-optimal regime, split a single training budget between model parameters and tokens \citep{hoffmann2022training}. \Cref{app:additional_related_work} discusses related work on active observation, simulation budget allocation, and scaling laws specific to reinforcement learning and world-model pre-training. Unlike \citet{hoffmann2022training}, \cref{thm:optimal_split} splits a single sample budget between two heterogeneous data streams---dynamics transitions and reward annotations---whose errors obey separately fitted power-law exponents, and yields a closed-form ratio in those exponents, the unit costs, and the Lipschitz coefficient inherited from \cref{lem:decomposed}.

Policy-gradient analysis under noisy rewards begins with the \reinforce estimator of \citet{williams1992simple}. A line of work studies robustness to reward corruption: \citet{zhang2021robust} analyze adversarial corruption in which an $\varepsilon$-fraction of episodes have their rewards or transitions arbitrarily modified and develop estimators with provable robustness guarantees. \citet{cai2025reinforcement} treat asymmetric verifier noise with false positives and false negatives in reinforcement learning with verifiable rewards. A parallel literature documents Goodhart-style overoptimization of learned reward models, in which the gap between proxy and gold rewards grows with optimization budget \citep{gao2023scaling}, a systematic failure mode rather than zero-mean reward noise. \Cref{app:additional_related_work} reviews the policy-gradient and variance-reduction lineage and the broader reward-modeling literature. Unlike the adversarial setting of \citet{zhang2021robust}, \cref{thm:noise,cor:fidelity} treat zero-mean i.i.d.\ reward noise as a per-rollout fidelity cost, which reduces annotation-budget allocation to a one-dimensional problem. \Cref{prop:biased_reward} addresses reward bias separately, showing that any non-zero reward-bias gradient survives trajectory averaging.

The choice of latent representation, and the regularity of the maps acting on it, governs how reliably an imagined rollout tracks reality. \citet{lecun2022path} advocates joint-embedding predictive architectures, in which prediction takes place in a learned latent space rather than at the level of raw observations. \citet{wang2026temporal} propose a temporal-straightening loss that encourages consecutive latent differences along a rollout to be parallel, so that long-horizon predictions in latent space follow a near-linear trajectory. On the regularization side, \citet{miyato2018spectral} introduce spectral normalization, which controls the Lipschitz constant of a neural network by normalizing the spectral norm of each weight matrix; \cref{app:additional_related_work} discusses related joint-embedding instantiations and Lipschitz-regularization mechanisms. These representation-learning and Lipschitz-regularization proposals are motivated by stability or representational quality, and their connection to long-horizon policy value is left implicit. Unlike \citet{wang2026temporal} and \citet{miyato2018spectral}, \cref{cor:lipschitz_tightens_bound,prop:temporal_straightening} couple these representation-learning desiderata to an explicit return-error coefficient, showing that the dynamics-error coefficient is monotone in the Lipschitz constants of the learned dynamics, reward, and policy and that the temporal-straightening loss is upper-bounded by a function of the latent-velocity-map Lipschitz constant.

\section{Properties of representations for training in imagination}

What makes a representation of the system useful for training in imagination?
\citet{lecun2022path} hypothesized that designing the dynamics, reward, and policy models to operate on latent states $z_t$ that capture a higher-level representation of the system---in place of raw observations $s_t$ and past actions $a_t$---may improve prediction and planning across different time horizons.
However, what properties such a representation should have has remained an open question.
\Cref{lem:decomposed} and \Cref{cor:lipschitz_tightens_bound} give one answer to this open question: representations that lower the Lipschitz constants of the learned models tighten our bound on return error.

Similar to \citet{asadi2018lipschitz}, we assume that the dynamics $f$, reward $r$, and policy $\pi$ satisfy the following Lipschitz conditions:
\begin{itemize}
  \item $f$ is $L_f$-Lipschitz: $\|f(s,a) - f(s',a')\| \leq L_f(\|s - s'\| + \|a - a'\|)$ for all $s,s',a,a'$.
  \item $r$ is $L_r$-Lipschitz: $|r(s,a) - r(s',a')| \leq L_r(\|s - s'\| + \|a - a'\|)$ for all $s,s',a,a'$.
  \item $\pi$ is $L_\pi$-Lipschitz: $\|\pi(s) - \pi(s')\| \leq L_\pi\|s - s'\|$ for all $s,s'$.
\end{itemize}

\begin{restatable}[Simulation error decomposition]{lemma}{decomposedlemma}\label{lem:decomposed}
Let $\mathcal{M}, \hat{\mathcal{M}}$ be \mdps sharing $(\mathcal{S}, \mathcal{A}, \gamma)$ with deterministic dynamics $f, \hat{f}$ and rewards $r, \hat{r}$.
Assume $\gamma L_f(1 + L_\pi) < 1$.
Then for any $L_\pi$-Lipschitz policy $\pi$,
\begin{equation}
\label{eq:decomposed_bound}
\left| J(\pi, \mathcal{M}) - J(\pi, \hat{\mathcal{M}}) \right|
\leq
\frac{1}{1-\gamma}\,\erew
+
\splitcoef\,\edyn.
\end{equation}
\end{restatable}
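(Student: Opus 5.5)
The plan is to compare the two deterministic trajectories generated by $\pi$ from the same initial state $s_0$. In $\mathcal{M}$ the trajectory is $s_{t+1} = f(s_t, \pi(s_t))$; in $\hat{\mathcal{M}}$ it is $\hat s_{t+1} = \hat f(\hat s_t, \pi(\hat s_t))$, with $\hat s_0 = s_0$. Write $a_t = \pi(s_t)$, $\hat a_t = \pi(\hat s_t)$, and $\delta_t := \|s_t - \hat s_t\|$. Telescoping the returns and inserting the intermediate term $r(\hat s_t, \hat a_t)$ gives, for each $t$,
\[
|r(s_t,a_t) - \hat r(\hat s_t,\hat a_t)| \le |r(s_t,a_t) - r(\hat s_t,\hat a_t)| + |r(\hat s_t,\hat a_t) - \hat r(\hat s_t,\hat a_t)| \le L_r(1+L_\pi)\,\delta_t + \erew .
\]
The first term uses the $L_r$-Lipschitz property of $r$ together with $\|a_t - \hat a_t\| \le L_\pi \delta_t$. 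The second uses the definition of $\erew$. Summing with weights $\gamma^t$, the reward-error part contributes $\erew/(1-\gamma)$. It then remains to bound $\sum_t \gamma^t \delta_t$.

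Next I would derive a recursion for $\delta_t$ by inserting $f(\hat s_t, \hat a_t)$:
\[
\delta_{t+1} \le \|f(s_t,a_t) - f(\hat s_t,\hat a_t)\| + \|f(\hat s_t,\hat a_t) - \hat f(\hat s_t,\hat a_t)\| \le L_f(1+L_\pi)\,\delta_t + \edyn .
\]
With $\delta_0 = 0$ and $K := L_f(1+L_\pi)$, unrolling gives $\delta_t \le \edyn \sum_{k=0}^{t-1} K^k$.

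The main obstacle is that $K$ may exceed $1$, so $\delta_t$ need not stay bounded; only $\gamma K < 1$ is assumed. I would therefore not bound $\delta_t$ uniformly, and would instead sum the discounted series directly by exchanging the order of summation:
\[
\sum_{t\ge 1} \gamma^t \sum_{k=0}^{t-1} K^k = \sum_{k\ge 0} K^k \sum_{t \ge k+1} \gamma^t = \sum_{k \ge 0} K^k \frac{\gamma^{k+1}}{1-\gamma} = \frac{\gamma}{(1-\gamma)(1-\gamma K)} .
\]
This is valid by Tonelli, since all terms are nonnegative and the last series converges because $\gamma K < 1$. Multiplying by $L_r(1+L_\pi)\,\edyn$ gives exactly the coefficient $\splitcoef$ in \cref{eq:decomposed_bound}.

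Finally, I would note that the return series are assumed to converge, which holds for instance under bounded rewards. Under that assumption the exchange of the absolute value with the infinite sum is justified by the triangle inequality applied to partial sums followed by a limit, and this completes the bound.
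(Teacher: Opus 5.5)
Your proposal is correct and follows the paper's proof essentially step for step: you insert $r(\hat s_t,\hat a_t)$ and $f(\hat s_t,\hat a_t)$ in the same way, obtain the same unrolled bound $\delta_t \le \edyn\sum_{k=0}^{t-1}K^k$, and exchange the order of summation to reach the geometric series in $\gamma K$. Your appeals to Tonelli and to convergence of the return series only make explicit some justifications that the paper leaves implicit.
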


\begin{proof}
See \Cref{app:proof_decomposed}.
\end{proof}

\begin{corollary}[Lipschitz constants control the dynamics-error coefficient]\label{cor:lipschitz_tightens_bound}
Under the hypothesis $\gamma L_f(1+L_\pi)<1$, the coefficient $\splitcoef$ of $\edyn$ in \cref{eq:decomposed_bound} is non-decreasing in each of $L_f$, $L_r$, and $L_\pi$. Hence, at fixed $\edyn$ and $\erew$, lowering any of $L_f, L_r, L_\pi$ tightens the bound in \cref{eq:decomposed_bound} on return error.
\end{corollary}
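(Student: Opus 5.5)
The plan is to treat the coefficient $C(L_f,L_r,L_\pi) := \splitcoef$ as an explicit function on the admissible region
\[
D := \{(L_f,L_r,L_\pi) : L_f,L_r,L_\pi \ge 0,\ \gamma L_f(1+L_\pi) < 1\},
\]
and to check monotonicity coordinate by coordinate. I will write $C = N/\bigl((1-\gamma)\,\Delta\bigr)$, where $N := \gamma L_r(1+L_\pi) \ge 0$ and $\Delta := 1-\gamma L_f(1+L_\pi) > 0$ on $D$. The factor $1-\gamma>0$ is a positive constant independent of the Lipschitz constants, so it can be set aside.

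I will then handle the three coordinates in turn.
\begin{itemize}
  \item \emph{Dependence on $L_r$.} $L_r$ enters only through the numerator, linearly with nonnegative slope $\gamma(1+L_\pi)/((1-\gamma)\Delta) \ge 0$, and $\Delta$ does not depend on $L_r$. So $C$ is non-decreasing in $L_r$.
  \item \emph{Dependence on $L_f$.} $L_f$ enters only through $\Delta$, which is strictly decreasing in $L_f$ and stays positive on $D$. So $1/\Delta$ is non-decreasing in $L_f$, and multiplying by the nonnegative factor $N/(1-\gamma)$ preserves this.
  \item \emph{Dependence on $L_\pi$.} This is the only case where $L_\pi$ appears in both numerator and denominator. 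Set $u := 1+L_\pi$, so that $C \propto u/(1-\gamma L_f u)$. The map $u\mapsto u/(1-au)$ with $a=\gamma L_f\ge 0$ is a product of two nonnegative non-decreasing functions on $\{u\ge 1,\ au<1\}$, hence non-decreasing. Alternatively, its derivative is $1/(1-au)^2>0$. Multiplying by $\gamma L_r/(1-\gamma)\ge 0$ finishes this case.
\end{itemize}

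One subtlety is that monotonicity should be asserted along directions that stay inside $D$. For $L_r$ this is automatic. For decreases of $L_f$ or $L_\pi$ (the direction relevant to \emph{tightening}), the constraint $\gamma L_f(1+L_\pi)<1$ is preserved. So the comparisons make sense, and $D$ is convex along each coordinate.

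Finally, the tightening claim follows directly. At fixed $\edyn\ge 0$ and $\erew$, the right-hand side of \cref{eq:decomposed_bound} equals $\erew/(1-\gamma) + C\,\edyn$, which is non-decreasing in $C$. Lowering any of the constants therefore lowers or preserves the bound given by \cref{lem:decomposed}.

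I expect no real obstacle here. The only step needing care is the $L_\pi$ case, where numerator and denominator both move, together with the domain bookkeeping that keeps $\Delta>0$.
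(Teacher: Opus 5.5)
Your proof is correct. The paper gives no separate proof of this corollary; it treats the claim as immediate from the closed form of the coefficient in \cref{lem:decomposed}. Your coordinate-wise check supplies exactly that tacit verification, in particular the $L_\pi$ case via $\frac{d}{du}\frac{u}{1-au} = \frac{1}{(1-au)^2} > 0$ with the domain kept inside $\gamma L_f(1+L_\pi)<1$. One small nit: $\Delta$ is strictly decreasing in $L_f$ only when $\gamma>0$. For $\gamma=0$ the coefficient is identically zero, so the non-strict monotonicity you actually need still holds.
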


\Cref{cor:lipschitz_tightens_bound} formalizes this: representations that lower the Lipschitz constants of the learned models $L_f, L_r, L_\pi$ tighten the bound in \cref{eq:decomposed_bound} on return error at fixed $\edyn, \erew$.

A simple implementation may define the latent state $z$ as an encoding of the current observation $z = \phi(s)$.
The Lipschitz constants are then defined by comparing the outputs of the learned models at nearby latent states, rather than at nearby raw observations. For example, the Lipschitz constant of the dynamics model is then $\|f(\phi(s),a)-f(\phi(s'),a')\| \leq L_f\bigl(\|\phi(s)-\phi(s')\|+\|a-a'\|\bigr)$. The reward and policy maps are handled analogously, with absolute value replacing the output norm for scalar rewards.
Higher-order representations are also possible: latent states $z_t$ may encode the full trajectory $(s_0, a_0, \ldots, s_t, a_{t-1})$, states or activations of other learned models, and so on.

This Lipschitz perspective also connects to the temporal straightening objective of \citet{wang2026temporal}, which compares consecutive latent differences generated by the dynamics model and encourages these differences to point in the same direction.
To make the connection explicit, let $\mathcal{Z}$ denote the latent state space and write $z_t = \phi(s_t)$ for the latent state associated with observation $s_t$.
Following \citet{wang2026temporal}, we write $f$ for the learned dynamics model on $\mathcal{Z}$, so that $z_{t+1}=f(z_t)$ along a latent rollout.
Thus, in this discussion of temporal straightening, $f$ no longer denotes the observation-level dynamics on $\mathcal{S}$.
\begin{definition}[temporal straightening curvature loss, adapted from \citet{wang2026temporal}]
\label{def:temporal_straightening_loss}
Define the latent velocity map $v(z):=f(z)-z$.
For a latent rollout $z_{t+1}=f(z_t)$ and any $t$ such that $v(z_t)\neq 0$ and $v(z_{t+1})\neq 0$, define
\begin{equation*}
\mathcal{L}_{\mathrm{curv}}(t)
:= 1 - \frac{v(z_t)^\top v(z_{t+1})}{\|v(z_t)\|\|v(z_{t+1})\|}.
\end{equation*}
\end{definition}

On observed transitions, $v(\phi(s_t))$ approximates $\phi(s_{t+1})-\phi(s_t)$, so changes in $v$ along a latent rollout measure how smoothly the latent state moves along the trajectory.
The temporal straightening objective maximizes the cosine similarity between $v(z_t)$ and $v(z_{t+1})$, or equivalently minimizes the curvature loss in \Cref{def:temporal_straightening_loss}.
The relevant Lipschitz quantity is not the Lipschitz constant of $f$ itself, but the Lipschitz constant of the latent velocity map $v$.

\begin{restatable}[temporal straightening from a Lipschitz latent velocity map]{proposition}{straighteningthm}\label{prop:temporal_straightening}
Let $z_{t+1}=f(z_t)$ be a latent rollout generated by the dynamics model, and define $v(z):=f(z)-z$.
Assume $v$ is $\varepsilon$-Lipschitz on the latent states visited by the rollout, with $0<\varepsilon<1$.
For any $t$ such that $v(z_t)\neq 0$ and $v(z_{t+1})\neq 0$, the temporal straightening curvature loss from \Cref{def:temporal_straightening_loss} satisfies
\begin{equation}
\label{eq:temporal_straightening_bound}
\mathcal{L}_{\mathrm{curv}}(t)
\leq \frac{\varepsilon^2}{2(1-\varepsilon)}.
\end{equation}
\end{restatable}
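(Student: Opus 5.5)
The plan is to reduce the cosine-based curvature loss to a statement about two nearby vectors, using that consecutive latent states are separated by exactly the velocity. Write $u := v(z_t)$ and $w := v(z_{t+1})$, both nonzero by hypothesis. The key observation is that $z_{t+1} - z_t = f(z_t) - z_t = v(z_t) = u$. Since $z_t$ and $z_{t+1}$ are both states visited by the rollout, the $\varepsilon$-Lipschitz assumption on $v$ applies to this pair and gives
\begin{equation*}
\|w - u\| = \|v(z_{t+1}) - v(z_t)\| \leq \varepsilon \|z_{t+1} - z_t\| = \varepsilon \|u\|.
\end{equation*}
So the next velocity is a relative $\varepsilon$-perturbation of the current one, and everything else is elementary vector geometry.

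Next I would rewrite the numerator of $\mathcal{L}_{\mathrm{curv}}(t) = (\|u\|\|w\| - u^\top w)/(\|u\|\|w\|)$ via the polarization identity $\|u-w\|^2 = \|u\|^2 + \|w\|^2 - 2u^\top w$. This yields
\begin{equation*}
2\bigl(\|u\|\|w\| - u^\top w\bigr) = \|u-w\|^2 - \bigl(\|u\| - \|w\|\bigr)^2 \leq \|u-w\|^2 \leq \varepsilon^2 \|u\|^2 .
\end{equation*}
Dividing by $2\|u\|\|w\|$ then gives $\mathcal{L}_{\mathrm{curv}}(t) \leq \varepsilon^2 \|u\| / (2\|w\|)$.

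The only remaining point, and the one place where $\varepsilon<1$ is used, is a lower bound on $\|w\|$ relative to $\|u\|$. The reverse triangle inequality gives $\|w\| \geq \|u\| - \|w-u\| \geq (1-\varepsilon)\|u\| > 0$. This makes the ratio $\|u\|/\|w\|$ at most $1/(1-\varepsilon)$ and yields \cref{eq:temporal_straightening_bound}.

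I do not expect a genuine obstacle. The step most easily gotten wrong is noticing that the Lipschitz inequality should be applied with displacement $z_{t+1}-z_t$ equal to $v(z_t)$ itself. That choice is what makes the bound scale-free, so it holds without any assumption on the magnitude of the velocities.
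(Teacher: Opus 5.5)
Your proposal is correct and matches the paper's proof essentially step for step. Both apply the Lipschitz bound with displacement $z_{t+1}-z_t=v(z_t)$, use the identity $\|u-w\|^2=(\|u\|-\|w\|)^2+2\|u\|\|w\|\,\mathcal{L}_{\mathrm{curv}}(t)$ to drop the nonnegative square, and finish with the reverse triangle inequality $\|w\|\geq(1-\varepsilon)\|u\|$.
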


\begin{proof}
See \Cref{app:proof_temporal_straightening}.
\end{proof}

\Cref{prop:temporal_straightening} shows that making the latent velocity map slowly varying makes consecutive latent differences nearly parallel, which directly lowers the temporal straightening loss.
Thus, minimizing the Lipschitz constant of $v$ minimizes the upper bound in \cref{eq:temporal_straightening_bound}.

Two caveats apply.
First, a representation that lowers the Lipschitz constants might also increase $\edyn$. The bound in \Cref{lem:decomposed} only tightens if the decrease in the dynamics-error coefficient outweighs the increase in $\edyn$.
Second, that bound assumes $\gamma L_f(1+L_\pi) < 1$, and the behavior when $\gamma L_f(1+L_\pi) \geq 1$ remains an open question for future work.

\subsection{Numerical illustration of \Cref{lem:decomposed}}
\label{sec:lemma_calibration}
We empirically test the inequality in \cref{eq:decomposed_bound}.
For each test configuration, define the realized return gap $\mathrm{LHS} := |J(\pi, \mathcal{M}) - J(\pi, \hat{\mathcal{M}})|$ and the bound's right-hand side $\mathrm{RHS} := (1-\gamma)^{-1}\,\erew + \splitcoef\,\edyn$, evaluated with the configuration's analytical Lipschitz constants $L_f, L_r, L_\pi$, discount factor $\gamma$, and realized per-step errors $\edyn, \erew$. We report the ratio $R := \mathrm{LHS}/\mathrm{RHS}$. The bound holds whenever $R \le 1$, and smaller $R$ indicates a looser bound.
Each configuration consists of an \mdp, an $L_\pi$-Lipschitz evaluation policy, and a perturbed model pair with realized per-step errors $\edyn, \erew$. We test on two benchmarks. The synthetic benchmark uses globally Lipschitz $f$ and $r$, so the hypotheses of \Cref{lem:decomposed} hold by construction. The Linear--Quadratic--Gaussian (LQG) benchmark uses a quadratic reward that is only locally Lipschitz, testing \Cref{lem:decomposed} on a bounded operating domain. \Cref{app:lemma_calibration_details} describes the per-configuration construction.
Across $n = 525$ configurations ($150$ synthetic, $375$ LQG) the bound holds on every configuration. The per-benchmark medians are $R^{\mathrm{med}}_{\mathrm{synth}} = 0.0035$ and $R^{\mathrm{med}}_{\mathrm{LQG}} = 0.034$, the pooled median is $0.015$, and the per-benchmark maxima are $R^{\max}_{\mathrm{synth}} = 0.9995$ and $R^{\max}_{\mathrm{LQG}} = 0.999$. For per-benchmark empirical CDF (ECDF) shapes and full implementation details, see \Cref{app:lemma_calibration_details}.
Every configuration satisfies the bound, but the typical bound is loose: at the median it overshoots by $1/R^{\mathrm{med}}_{\mathrm{LQG}} \approx 29\times$ on LQG and $1/R^{\mathrm{med}}_{\mathrm{synth}} \approx 286\times$ on the synthetic benchmark (factor $\approx 65\times$ at the pooled median). Both benchmarks use deterministic dynamics; extending the calibration to stochastic dynamics or unbounded operating domains would require revisiting the assumptions of \Cref{lem:decomposed} and is left to future work.

\section{The optimal sample allocation to minimize return error}
\label{sec:optimal_split}

We distinguish between two types of samples: dynamics-transition samples and reward samples. A dynamics-transition sample $(s,a,f(s,a))$ consists of
a state $s$, an action $a$, and the resulting next state $f(s,a)$.
A reward sample $(s,a,r(s,a))$ consists of a state, an action, and the resulting reward $r(s,a)$. Let $\Ndyn$ and $\Nrew$ denote the numbers of
dynamics-transition and reward samples, and let $\cdyn$ and $\crew$ denote their per-sample costs. These costs may reflect any incurred costs, including environment interaction, annotation, and training on the samples.
We reuse the symbols $\edyn$ and $\erew$ to denote the error
levels achievable when the sample counts are $\Ndyn$ and $\Nrew$, according to standard
power laws
\begin{equation}
\label{eq:error_scaling}
\edyn(\Ndyn) = A_d \cdot \Ndyn^{-\alpha}, \qquad
\erew(\Nrew) = A_r \cdot \Nrew^{-\beta}
\end{equation}
where $\alpha, \beta > 0$ are exponents fit from data, and $A_d, A_r$ are constants.
For each budget $B = \cdyn \Ndyn + \crew \Nrew$, let $(\Ndyn^*, \Nrew^*)$ denote a minimizer of the bound in \cref{eq:decomposed_bound}, and let $\edyn^* := \edyn(\Ndyn^*)$ and $\erew^* := \erew(\Nrew^*)$ be the dynamics and reward errors at those minimizing sample counts. We study the optimal ratio of dynamics samples to reward samples, $\frac{\Ndyn^*}{\Nrew^*}$, as $B \to \infty$.

\begin{restatable}[Optimal dynamics-to-reward sample ratio]{theorem}{optimalsplitthm}\label{thm:optimal_split}
If \cref{eq:error_scaling} holds with exponents $\alpha, \beta > 0$ and the bound of \Cref{lem:decomposed} holds, then the minimizing sample counts $(\Ndyn^*, \Nrew^*)$ under the constraint $\cdyn \Ndyn + \crew \Nrew = B$ satisfy
\begin{equation}
\label{eq:optimal_ratio}
\frac{\Ndyn^*}{\Nrew^*}
:= \frac{\alpha}{\beta} \cdot \frac{\gamma L_r(1+L_\pi)}{1-\gamma L_f(1+L_\pi)} \cdot \frac{\crew}{\cdyn} \cdot \frac{\edyn^*}{\erew^*}.
\end{equation}
\end{restatable}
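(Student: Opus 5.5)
We treat $\Ndyn$ and $\Nrew$ as continuous positive variables and substitute the power laws of \cref{eq:error_scaling} into the right-hand side of \cref{eq:decomposed_bound}. Write $a := \frac{1}{1-\gamma}$ and $C := \splitcoef$. The problem then becomes
\begin{equation*}
\min_{\Ndyn,\Nrew>0}\; \Phi(\Ndyn,\Nrew) := a\,A_r\Nrew^{-\beta} + C\,A_d\Ndyn^{-\alpha}
\quad\text{s.t.}\quad \cdyn\Ndyn+\crew\Nrew=B .
\end{equation*}
The plan has three parts: show that a minimizer exists and is interior, write down the Lagrange (KKT) stationarity conditions, and divide them to eliminate the multiplier.

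\textbf{Existence and interiority.} We assume the nondegenerate case $\gamma>0$ and $L_r>0$, so that $C>0$, together with $A_d,A_r>0$. Each term $x\mapsto x^{-p}$ with $p>0$ is strictly convex on $(0,\infty)$ and tends to $+\infty$ as $x\to 0^+$. Hence $\Phi$ is strictly convex on the open segment $\{\cdyn\Ndyn+\crew\Nrew=B,\ \Ndyn,\Nrew>0\}$ and diverges at both endpoints. So a unique minimizer $(\Ndyn^*,\Nrew^*)$ exists in the interior, and the first-order conditions are both necessary and sufficient. If $C=0$ the objective does not depend on $\Ndyn$, the optimum sits on the boundary $\Ndyn^*=0$, and \cref{eq:optimal_ratio} still returns ratio $0$. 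I would mention this case separately.

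\textbf{Stationarity.} With multiplier $\lambda$, the conditions are
\begin{equation*}
C\alpha A_d (\Ndyn^*)^{-\alpha-1}=\lambda\cdyn,\qquad a\beta A_r(\Nrew^*)^{-\beta-1}=\lambda\crew .
\end{equation*}
Next I rewrite $A_d(\Ndyn^*)^{-\alpha-1}=\edyn^*/\Ndyn^*$ and $A_r(\Nrew^*)^{-\beta-1}=\erew^*/\Nrew^*$. Dividing the first condition by the second eliminates $\lambda$:
\begin{equation*}
\frac{C\alpha\,\edyn^*/\Ndyn^*}{a\beta\,\erew^*/\Nrew^*}=\frac{\cdyn}{\crew}.
\end{equation*}
Solving for $\Ndyn^*/\Nrew^*$ gives $\frac{\alpha}{\beta}\cdot\frac{C}{a}\cdot\frac{\crew}{\cdyn}\cdot\frac{\edyn^*}{\erew^*}$. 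It remains to simplify $C/a$: the factor $(1-\gamma)$ cancels, leaving $C/a=\frac{\gamma L_r(1+L_\pi)}{1-\gamma L_f(1+L_\pi)}$, which is exactly \cref{eq:optimal_ratio}.

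\textbf{Main obstacle.} The difficulty is justifying the setup, not the algebra. The statement is an implicit characterization, since $\edyn^*/\erew^*$ depends on the optimum itself. I would therefore frame the result as the first-order optimality condition and rely on convexity for uniqueness. Two modelling points must also be made explicit: sample counts are relaxed from integers to reals (the relaxation is asymptotically harmless as $B\to\infty$), and the hypothesis $\gamma L_f(1+L_\pi)<1$ is needed so that $C$ is finite and positive.
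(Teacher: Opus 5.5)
Your proposal is correct and follows the paper's proof essentially step for step. Like the paper, you substitute the power laws into \cref{eq:decomposed_bound}, write the Lagrange stationarity conditions, divide them to eliminate $\lambda$, cancel the $(1-\gamma)$ factor, and use convexity for uniqueness. Your endpoint-divergence argument for an interior minimizer is actually sturdier than the paper's monotonicity remark. Your aside on the degenerate case $C=0$ is loose, though: at $\Ndyn^*=0$ the power law makes $\edyn^*$ infinite, so \cref{eq:optimal_ratio} reads $0\cdot\infty$ rather than $0$.
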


\begin{proof}
See \Cref{app:proof_optimal_split}.
\end{proof}

\Cref{thm:optimal_split} gives the optimal sample ratio $\Ndyn^*/\Nrew^*$ in terms of the error ratio $\edyn^*/\erew^*$, and shows that these two ratios are proportional. The multiplier in \cref{eq:optimal_ratio} uses the global Lipschitz constants $L_f, L_r$ assumed by \Cref{lem:decomposed}, so it is an upper bound rather than an equality. \Cref{sec:allocation_evaluation} measures, on the same configurations, how much smaller the realized value-function sensitivities are than these global constants, and how close the resulting prediction of $\Ndyn^*/\Nrew^*$ becomes. For practitioners who design systems that train policies in imagination, changes in sample costs or planning horizon affect the optimal sample ratio as follows.

\paragraph{Sample costs.}
Costs enter through $\cdyn\Ndyn + \crew\Nrew = B$ and the fitted error curves in \cref{eq:error_scaling}; in \cref{eq:optimal_ratio}, they appear as $\crew/\cdyn$.
Lowering $\crew$ lowers this factor, so $\Ndyn^*/\Nrew^*$ decreases and the allocation shifts toward reward samples.
Lowering $\cdyn$ raises the factor, so $\Ndyn^*/\Nrew^*$ increases and the allocation shifts toward dynamics-transition samples.
However, since the optimal sample ratio also includes $\edyn^*/\erew^*$, these statements give the direction of the cost effect rather than a complete allocation rule by themselves.

\paragraph{Planning horizon.}
The discounting effect is the familiar one: smaller $\gamma$ gives less weight to later rewards.
As $\gamma$ decreases, \cref{eq:decomposed_bound} becomes tighter, and the dynamics multiplier in \cref{eq:optimal_ratio}, $\gamma L_r(1+L_\pi)/(1-\gamma L_f(1+L_\pi))$, decreases.
The optimal allocation therefore shifts toward reward samples, as expected for a shorter-horizon objective in which dynamics errors have fewer discounted steps to affect future rewards.
%
A similar effect arises from lowering the Lipschitz constants of the learned models. By \Cref{cor:lipschitz_tightens_bound}, lowering any of $L_f, L_r, L_\pi$ tightens \cref{eq:decomposed_bound} and decreases the dynamics multiplier in \cref{eq:optimal_ratio}. The optimal allocation therefore shifts toward reward samples, since dynamics errors then compound less strongly along rollouts.

\subsection{Numerical experiments: scaling of dynamics and reward errors}
\label{sec:error_scaling}

We estimate how fast each model learns with more data and whether the dynamics and reward models learn at the same rate.
To this end, we conduct an experiment that tests the power-law scaling assumptions in \cref{eq:error_scaling} and estimates the exponents $\alpha$ and $\beta$ for a particular architecture and environment. In this experiment, we estimate the dynamics and reward errors as a function of the number of training samples $\Ndyn$ and $\Nrew$, respectively, and fit the standard power-law scaling laws from \Cref{eq:error_scaling}. \Cref{fig:error_scaling} shows the fitted scaling laws, which are consistent with the power-law assumptions in \Cref{thm:optimal_split}.
It measures $\edyn$ and $\erew$ on a fixed held-out set $\mathcal{D}_{\mathrm{val}}$ of transitions $(s, a, s', r)$ in a synthetic continuous-control environment whose dynamics and reward function are defined by frozen, randomly-initialized 2-layer ReLU MLPs, and whose students $\hat f$ and $\hat r$ use the same architecture as the teachers. Specifically, $\edyn(\Ndyn) := \frac{1}{|\mathcal{D}_{\mathrm{val}}|} \sum_{(s, a, s', r) \in \mathcal{D}_{\mathrm{val}}} \|\hat f(s, a) - s'\|^2$ and $\erew(\Nrew) := \frac{1}{|\mathcal{D}_{\mathrm{val}}|} \sum_{(s, a, s', r) \in \mathcal{D}_{\mathrm{val}}} (\hat r(s, a) - r)^2$, where $\hat f$ and $\hat r$ are trained on $\Ndyn$ dynamics-transition samples and $\Nrew$ reward samples drawn independently of $\mathcal{D}_{\mathrm{val}}$. We use $\edyn$ and $\erew$ as practical surrogates for the sup-norm errors in \Cref{lem:decomposed}.
The fitted laws are $\edyn(\Ndyn) = 0.34\, \Ndyn^{-0.11}$ with $R^2 = 0.954$ and $\erew(\Nrew) = 90.4\, \Nrew^{-0.96}$ with $R^2 = 0.997$. For bootstrap standard errors, 95\% bootstrap confidence intervals, and full implementation details, see \Cref{app:error_scaling_details}.
The exponent ratio $0.96 / 0.11 \approx 9$ is the central empirical observation: reward error decays nearly an order of magnitude faster per decade of training data than dynamics error. This ratio of exponents is consistent with $\hat r$ predicting a scalar while $\hat f$ predicts a $d_s$-dimensional next state, and its exact value depends on the dimensions and architectures used here.

\begin{figure}[t]
  \centering
  \includegraphics[width=0.78\linewidth]{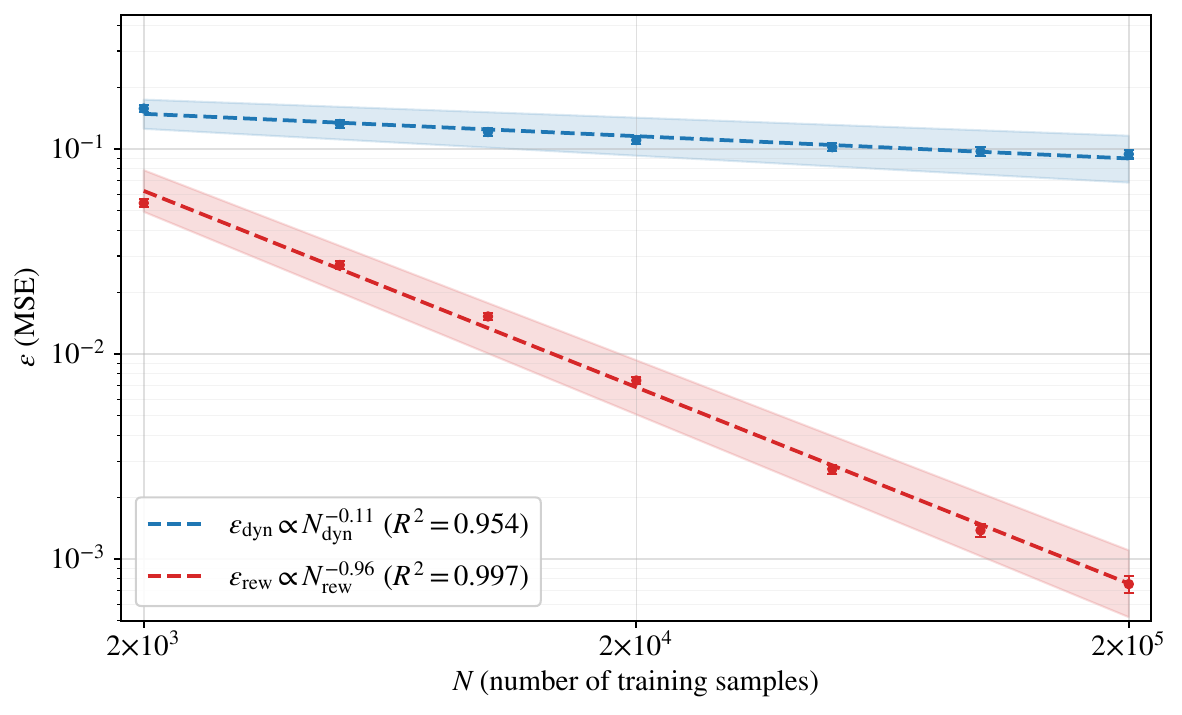}
  \caption{Dynamics and reward errors obey distinct power-law scaling laws.
  Reward error decays roughly $9\times$ faster per decade of training data than dynamics error ($\approx \frac{0.96}{0.11}$).
  \Cref{thm:optimal_split} uses this ratio to characterize the optimal fraction of transitions that should carry a reward annotation.}
  \label{fig:error_scaling}
\end{figure}

\subsection{Empirical evaluation of \cref{eq:optimal_ratio}}
\label{sec:allocation_evaluation}

\Cref{eq:optimal_ratio} predicts $\Ndyn^*/\Nrew^*$ as the product of two factors: the proportionality $\Ndyn^*/\Nrew^* \propto \edyn^*/\erew^*$ and a multiplier that depends on $\gamma$, $L_f$, $L_r$, $L_\pi$, $\crew/\cdyn$, and $\alpha/\beta$. We test the two factors separately: whether the proportionality holds when the multiplier is replaced by realized value-function sensitivities, and how loose the multiplier is when instantiated with the global Lipschitz constants assumed by \Cref{lem:decomposed}.
Define the log-ratio residual $\ell := \log(\Ndyn^*/\Nrew^*) - \log(\edyn^*/\erew^*)$, so $|\ell| \le \log 3$ means predicted and realized ratios agree within a factor of $3$. The realized value-function sensitivities $S_f, S_r$ and the per-configuration construction are defined in \Cref{app:allocation_evaluation_details}.
Replacing the global constants $L_f, L_r$ with the realized sensitivities $S_f, S_r$ recovers the predicted ratio. We test this recovery on configurations whose value function $V^\pi$ has a known parametric form: linear, $\tanh$, $\sin$, and a separate quadratic-value control group that isolates the looseness from using $\sup$-norm overestimates in place of the realized sensitivities. The linear configurations achieve median $|\ell| = 0$ and Spearman rank correlation $\rho = 1$ between predicted and realized ratios; this case is an algebraic consistency check rather than an empirical test, since a linear $V^\pi$ forces the realized sensitivities to match the analytical ratio. The empirical content comes from the nonlinear configurations: $\tanh$ and $\sin$ give median $|\ell| = 0.054$. On the control group, substituting $\sup$-norm overestimates for the realized sensitivities inflates the median residual by roughly an order of magnitude. \Cref{fig:allocation_form} plots predicted against realized sample ratios $\Ndyn^*/\Nrew^*$ for each group.
Instantiating the multiplier with the analytical global Lipschitz constants of \Cref{lem:decomposed}, by contrast, overshoots realized ratios by about three orders of magnitude. On the LQG configurations every configuration's $\ell$ is positive, with median $\ell = 7.585$ (a factor $\exp(7.585) \approx 1968$ between predicted and realized ratios). \Cref{fig:lipschitz_slack} shows the per-configuration distribution of $\ell$. The contraction regime, sample sizes, statistical methodology, and a separate ratio isolating the dynamics-coefficient contribution to the multiplier are reported in \Cref{app:allocation_evaluation_details}.

\begin{figure}[t]
  \centering
  \includegraphics[width=0.85\linewidth]{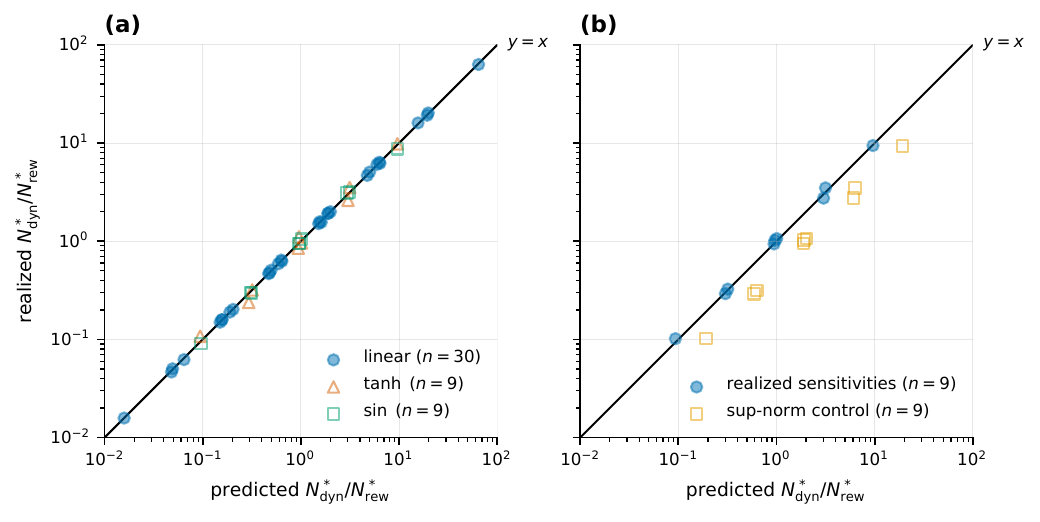}
  \caption{Predicted and realized sample ratios $\Ndyn^*/\Nrew^*$ agree when global Lipschitz constants are replaced by realized value-function sensitivities $S_f, S_r$. The linear, $\tanh$, and $\sin$ groups in panel~(a) concentrate on the diagonal $y=x$, while the $\sup$-norm control in panel~(b) lies systematically below it---using the global $\sup$-norm in place of the realized sensitivities reintroduces the looseness shown in \Cref{fig:lipschitz_slack}. See \Cref{app:allocation_evaluation_details} for plotting and methodology details.}
  \label{fig:allocation_form}
\end{figure}

The proportionality $\Ndyn^*/\Nrew^* \propto \edyn^*/\erew^*$ in \cref{eq:optimal_ratio} therefore carries the predictive content, while the multiplier built from global Lipschitz constants is loose at the order-of-magnitude scale. Recovering the predicted ratio requires evaluating $V^\pi$ at perturbed models (the construction in \Cref{app:allocation_evaluation_details}), which may be impractical at scale.

\section{Cost-effective learning from noisy rewards}
\label{sec:noisy_rewards}

The reward-noise analysis in this section is independent of the allocation results in \Cref{sec:optimal_split}: the unbiasedness statement for \reinforce under zero-mean reward noise applies to any stationary MDP satisfying the standard policy-gradient assumptions, regardless of how dynamics and reward samples are allocated.

\noindent We study policy optimization under noisy rewards through the
classical \reinforce estimator introduced by
\citet{williams1992simple}. In this section let $\pi_\theta$ be a
differentiable policy, fix a finite horizon $H$, and fix a discount
factor $\gamma \in [0,1)$. Define
$J_H(\pi, \mathcal{M}) := \mathbb{E}[\sum_{t=0}^{H-1} \gamma^t r(s_t, a_t)]$,
where the expectation is over trajectories generated by executing $\pi$ in $\mathcal{M}$.
Define the discounted cumulative reward from time $t$ onward as
$G_t := \sum_{t'=t}^{H-1} \gamma^{t'-t} r(s_{t'}, a_{t'})$,
so that $G_0$ is the discounted return of the full trajectory and
$J_H(\pi, \mathcal{M}) = \mathbb{E}[G_0]$. Define the finite-horizon
policy gradient by
$g_H := \nabla_\theta J_H(\pi_\theta, \mathcal{M})$.
Define the noisy counterpart of $G_t$ as
$\hat{G}_t := \sum_{t'=t}^{H-1} \gamma^{t'-t}\hat{r}_{t'}$, where $\hat{r}_t$ denotes the observed reward at time $t$.
The \reinforce estimator of $g_H$ computed from a single trajectory is
$\hat{g}^{(1)} := \sum_{t=0}^{H-1} \nabla_\theta \log \pi_\theta(a_t \mid s_t)\,\hat{G}_t$.
To estimate $g_H$, sample $K \geq 1$
independent trajectories by executing $\pi_\theta$ in $\mathcal{M}$.
For each $k \in \{1,\ldots,K\}$, let $\hat{g}^{(k)}$ denote the
\reinforce estimator computed on the $k$th trajectory, and define
$\hat{g} := \frac{1}{K}\sum_{k=1}^K \hat{g}^{(k)}$.
The estimators $\hat{g}^{(1)},\ldots,\hat{g}^{(K)}$, $\hat{g}$, and their
noise-free counterparts are vector-valued. To measure their dispersion with a
single scalar, we use the natural generalization of scalar variance obtained by
adding coordinate variances: for $z=(z_1,\ldots,z_d)$, write
$\mathrm{Var}[z] := \sum_{i=1}^d \mathrm{Var}[z_i]$.

\begin{restatable}[Finite-horizon \reinforce under noisy rewards]{theorem}{noisethm}\label{thm:noise}
Assume
$W_H^2 := \mathbb{E}\bigl[\max_{0 \leq t \leq H-1}\|\nabla_\theta \log \pi_\theta(a_t \mid s_t)\|^2\bigr] < \infty$.
Suppose the rewards are observed with additive noise,
$\hat{r}_t = r(s_t, a_t) + \eta_t$, where the noise variables $\eta_t$ are
i.i.d.\ with $\mathbb{E}[\eta_t] = 0$ and
$\mathrm{Var}[\eta_t] = \sigma_\eta^2 < \infty$, and each $\eta_t$ is
independent of the state-action history $((s_0,a_0),\ldots,(s_t,a_t))$.
Then $\hat{g}$ satisfies
\begin{equation}
\mathbb{E}[\hat{g}] = g_H, \qquad
\mathrm{Var}[\hat{g}] \leq \mathrm{Var}[\hat{g}]_{\eta \equiv 0}
+ \frac{\sigma_\eta^2 H W_H^2}{K(1-\gamma)^2}.
\end{equation}
where $\mathrm{Var}[\hat{g}]_{\eta \equiv 0}$ denotes the variance of the same
estimator when the rewards are noise-free.
\end{restatable}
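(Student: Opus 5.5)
The plan is to decompose the single-trajectory estimator into its noise-free part and a noise part, then treat the mean and the variance separately. Write $\psi_t := \nabla_\theta \log \pi_\theta(a_t \mid s_t)$. Since $\hat{G}_t = G_t + N_t$ with $N_t := \sum_{t'=t}^{H-1}\gamma^{t'-t}\eta_{t'}$, we have
\[
\hat{g}^{(1)} = g^{(1)} + \xi^{(1)}, \qquad g^{(1)} := \sum_{t=0}^{H-1}\psi_t G_t, \qquad \xi^{(1)} := \sum_{t=0}^{H-1}\psi_t N_t .
\]
For unbiasedness, I will use the standard finite-horizon policy-gradient identity $\mathbb{E}[g^{(1)}] = g_H$ (reward-to-go form, which is valid because $\mathbb{E}[\psi_t \mid s_t] = 0$ kills past rewards). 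I will then show $\mathbb{E}[\xi^{(1)}] = 0$. For $t' \ge t$, the noise $\eta_{t'}$ is independent of the history up to $(s_{t'},a_{t'})$, and $\psi_t$ is a function of that history. Hence $\mathbb{E}[\psi_t \eta_{t'}] = \mathbb{E}[\psi_t]\,\mathbb{E}[\eta_{t'}] = 0$, with integrability supplied by $W_H^2 < \infty$ and $\sigma_\eta^2 < \infty$ via Cauchy--Schwarz. Averaging over the $K$ i.i.d.\ trajectories then gives $\mathbb{E}[\hat g] = g_H$.

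For the variance, independence across trajectories gives $\mathrm{Var}[\hat g] = \frac1K \mathrm{Var}[\hat g^{(1)}]$ and $\mathrm{Var}[\hat g]_{\eta\equiv 0} = \frac1K\mathrm{Var}[g^{(1)}]$, so it suffices to show
\[
\mathrm{Var}[\hat g^{(1)}] \le \mathrm{Var}[g^{(1)}] + \frac{\sigma_\eta^2 H W_H^2}{(1-\gamma)^2}.
\]
Expanding coordinatewise, $\mathrm{Var}[g+\xi] = \mathrm{Var}[g] + \mathbb{E}\|\xi\|^2 + 2\sum_i \mathrm{Cov}(g_i,\xi_i)$, where I used $\mathbb{E}\xi = 0$. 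The cross term is the main obstacle. It reduces to $\mathbb{E}[\psi_{t,i} G_t\,\psi_{u,i}\eta_{t'}]$ with $t' \ge u$. If noise at time $t'$ is independent of the entire trajectory, this factorizes and vanishes. However, the hypothesis only states independence from the past history, and future actions may react to $\eta_{t'}$ only through the observed reward if the policy conditions on it. In this setting the policy depends on $s_t$ alone and the dynamics do not see $\eta$. I will therefore read the hypothesis as saying the noise process is exogenous, so that $(\eta_t)$ is independent of the state-action trajectory, argue this explicitly, and conclude that the cross term is $0$.

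For the noise term, I will bound $\|\xi^{(1)}\| \le \max_t\|\psi_t\| \sum_t |N_t|$. Cauchy--Schwarz gives $(\sum_t |N_t|)^2 \le H\sum_t N_t^2$. Independence of the noise from the trajectory then yields
\[
\mathbb{E}\|\xi\|^2 \le H\, W_H^2 \sum_{t=0}^{H-1}\mathbb{E}[N_t^2].
\]
Since $\mathbb{E}[N_t^2] = \sigma_\eta^2\sum_{j}\gamma^{2j} \le \sigma_\eta^2/(1-\gamma^2) \le \sigma_\eta^2/(1-\gamma)$, this gives a bound with an extra factor of $H$. To reach the stated $H/(1-\gamma)^2$ rather than $H^2$, I will instead swap the order of summation: write $\xi = \sum_{t'}\eta_{t'}\Phi_{t'}$ with $\Phi_{t'} := \sum_{t\le t'}\gamma^{t'-t}\psi_t$, so $\|\Phi_{t'}\| \le \max_t\|\psi_t\|/(1-\gamma)$. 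Because the $\eta_{t'}$ are independent and zero-mean and independent of the $\Phi$'s, the cross terms vanish and
\[
\mathbb{E}\|\xi\|^2 = \sigma_\eta^2\sum_{t'=0}^{H-1}\mathbb{E}\|\Phi_{t'}\|^2 \le \frac{\sigma_\eta^2 H W_H^2}{(1-\gamma)^2}.
\]
This is exactly the stated bound, and dividing by $K$ finishes the proof.
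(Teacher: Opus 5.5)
Your proposal is correct and follows essentially the paper's route: the same split $\hat g^{(1)} = g^{(1)} + \delta^{(1)}$, the same swap of summation to $v_{t'} = \sum_{t\le t'}\gamma^{t'-t}w_t$ (your $\Phi_{t'}$) with $\|v_{t'}\|\le \max_t\|w_t\|/(1-\gamma)$, and the same $1/K$ reduction; the only difference is that the paper removes your cross term through the law of total variance conditioned on $\tau$ rather than a direct covariance expansion. Your explicit strengthening to noise independent of the entire trajectory is exactly what the paper's proof uses without comment (it writes $\mathbb{E}[\eta_{t'}\mid\tau]=\mathbb{E}[\eta_{t'}]$ for the full $\tau$), so flagging it is a fair clarification rather than a gap, and your unbiasedness step is slightly sharper because it needs only the stated past-history independence.
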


\begin{proof}
See \Cref{app:proof_noise}.
\end{proof}

In many settings, practitioners can pay more per reward annotation to obtain less noisy rewards, for example by averaging multiple annotators or using a more careful annotation pipeline. We use \Cref{thm:noise} to ask how a fixed budget for reward annotations should be split between fidelity (lower noise per annotation) and quantity (more rollouts at higher noise).

Let $c > 0$ denote the per-rollout cost of acquiring reward annotations along that rollout, and define $\sigma_\eta^2(c) := \mathrm{Var}[\eta_t \mid \text{per-rollout annotation cost equals } c] \in [0, \infty)$ as the variance of the reward noise $\eta_t$ from \Cref{thm:noise} when reward annotations are acquired at per-rollout cost $c$.
We assume that $\sigma_\eta^2 \colon (0,\infty) \to [0,\infty)$ is measurable. Given a budget $B > 0$ for reward annotations, the number of independent rollouts that fit in the budget at fidelity $c$ is $K = B/c$.

\begin{restatable}[Optimal fidelity for noise-induced variance]{corollary}{fidelitythm}\label{cor:fidelity}
Define $\Phi(c) := c\,\sigma_\eta^2(c)$. Under the assumptions of \Cref{thm:noise} with $\mathrm{Var}[\eta_t] = \sigma_\eta^2(c)$ and $K = B/c$, the upper bound on the noise-induced excess variance from \Cref{thm:noise} equals $\Phi(c)\,H\,W_H^2 / (B\,(1-\gamma)^2)$.
Consequently, any $c^* \in \operatorname*{arg\,min}_{c > 0} \Phi(c)$ minimizes this upper bound over $c > 0$.
\end{restatable}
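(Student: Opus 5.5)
The argument is a direct substitution into \Cref{thm:noise}, followed by the observation that positive constant factors do not change the set of minimizers.

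\textbf{Step 1: instantiate \Cref{thm:noise}.} Fix $c>0$. The hypotheses of \Cref{thm:noise} hold with $\mathrm{Var}[\eta_t]=\sigma_\eta^2(c)$, since $\sigma_\eta^2(c)<\infty$ by assumption. The noise-induced excess variance is therefore bounded by
\begin{equation*}
\frac{\sigma_\eta^2(c)\,H\,W_H^2}{K(1-\gamma)^2}.
\end{equation*}
Substituting $K=B/c$ gives
\begin{equation*}
\frac{\sigma_\eta^2(c)\,H\,W_H^2\,c}{B(1-\gamma)^2}=\frac{\Phi(c)\,H\,W_H^2}{B(1-\gamma)^2},
\end{equation*}
which is the first claim. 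One caveat: $W_H^2$ depends only on the policy and the dynamics, not on the reward noise, so it does not vary with $c$. I would state this explicitly, because otherwise the constant could change with $c$.

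\textbf{Step 2: the argmin statement.} Write $\kappa := H W_H^2/(B(1-\gamma)^2)$. This quantity is independent of $c$, and $\kappa\ge 0$. If $\kappa>0$, the bound equals $\kappa\,\Phi(c)$, and multiplication by a positive constant is strictly increasing. Hence $\Phi(c^*)\le\Phi(c)$ for all $c>0$ implies $\kappa\Phi(c^*)\le\kappa\Phi(c)$ for all $c>0$, so $c^*$ minimizes the bound. If $\kappa=0$, which happens when $W_H=0$, the bound is identically zero and every $c>0$, including $c^*$, minimizes it. Measurability of $\sigma_\eta^2$ is not needed for this pointwise argument. Note also that the statement is conditional: it says ``any $c^*$ in the argmin'', and if the argmin is empty the claim holds vacuously, so existence of a minimizer does not need to be shown.

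\textbf{Main obstacle.} The only real subtlety is that $K=B/c$ need not be an integer, whereas \Cref{thm:noise} requires an integer $K\ge 1$. I would treat $K$ as a continuous relaxation, which is the convention the statement itself adopts when it sets $K=B/c$. I would note that the bound of \Cref{thm:noise} is a formula in $K$ that is meaningful for real $K>0$, and that the corollary concerns that upper-bound expression rather than an actual estimator with a fractional number of rollouts. Equivalently, one can restrict attention to those $c$ for which $B/c$ is a positive integer; the algebra above is unchanged.
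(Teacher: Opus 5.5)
Your proposal is correct and follows the paper's proof: you substitute $K=B/c$ into the excess-variance term of \Cref{thm:noise} and observe that the prefactor $H\,W_H^2/(B(1-\gamma)^2)$ is nonnegative and independent of $c$. Your separate treatment of the degenerate case $W_H=0$ is slightly more careful than the paper, which says the bound and $\Phi$ ``share their minimizers''; when the prefactor vanishes that holds only in the direction the corollary actually asserts.
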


\begin{proof}
Substituting $K = B/c$ into the upper bound from \Cref{thm:noise} gives $\sigma_\eta^2(c)\,H\,W_H^2 / (K\,(1-\gamma)^2) = \Phi(c)\,H\,W_H^2 / (B\,(1-\gamma)^2)$. The prefactor $H\,W_H^2 / (B\,(1-\gamma)^2)$ is non-negative and does not depend on $c$, so the upper bound and $\Phi(c)$ share their minimizers over $c > 0$.
\end{proof}

\Cref{cor:fidelity} reduces the choice of fidelity for reward annotations to minimizing $\Phi(c)$. We illustrate the consequences with three examples of how the noise variance $\sigma_\eta^2(c)$ depends on the cost $c$, summarized in \Cref{fig:fidelity_examples}.

\paragraph{Power-law fidelity.}
Suppose $\sigma_\eta^2(c) = A\,c^{-p}$ for constants $A, p > 0$, so that each multiplicative increase in cost yields a fixed multiplicative reduction in noise variance. Then $\Phi(c) = A\,c^{1-p}$, which is strictly decreasing in $c$ when $p > 1$, strictly increasing in $c$ when $p < 1$, and constant when $p = 1$. The exponent $p = 1$ therefore separates two regimes: when $p > 1$, the bound is minimized by paying for the highest-fidelity annotations available; when $p < 1$, the bound is minimized by paying as little as possible per annotation and using the saved budget for additional rollouts.

\paragraph{Bounded fidelity.}
Suppose $\sigma_\eta^2(c) = \sigma_0^2\,(1 - c/c_{\max})$ on $(0, c_{\max}]$ for constants $\sigma_0^2, c_{\max} > 0$, modeling a setting in which annotations have variance $\sigma_0^2$ as $c \to 0$ and become noise-free at the finite cost $c_{\max}$. Then $\Phi(c) = \sigma_0^2\,c\,(1 - c/c_{\max})$ is a downward parabola in $c$ that is maximized at $c = c_{\max}/2$, attains the value $0$ at $c = c_{\max}$, and approaches $0$ as $c \to 0$. Both extremes of the cost range therefore minimize the bound, and intermediate fidelities are strictly worse.

\paragraph{Irreducible noise floor.}
Suppose $\sigma_\eta^2(c) = \sigma_{\mathrm{floor}}^2 + A/c$ for constants $\sigma_{\mathrm{floor}}^2, A > 0$, modeling a setting in which no level of spending can drive the noise variance below $\sigma_{\mathrm{floor}}^2$. Then $\Phi(c) = \sigma_{\mathrm{floor}}^2\,c + A$ is strictly increasing in $c$, so the bound is minimized by $c \to 0$: when reward noise has a floor that money cannot remove, the optimal allocation buys the cheapest annotations and relies on the $1/K$ factor in \Cref{thm:noise} to reduce variance.

Together, these cases show that the shape of $\Phi(c)$ determines whether the budget should be spent on fidelity, on quantity, or on one of the two extremes.

The preceding results assume the reward noise is zero-mean. We note that this assumption is essential: averaging over more trajectories cannot remove a systematic reward bias.

\begin{restatable}[Finite-horizon \reinforce under biased rewards]{proposition}{biasedrewardthm}\label{prop:biased_reward}
Let $b:\mathcal{S}\times\mathcal{A}\to\mathbb{R}$ be a reward bias function,
define $\tilde{r}(s,a) := r(s,a) + b(s,a)$, and define
$\tilde{\mathcal{M}} := (\mathcal{S}, \mathcal{A}, f, \tilde{r}, \gamma)$.
For trajectories sampled by executing $\pi_\theta$ under the true dynamics $f$,
let $\tilde{g}^{(1)},\ldots,\tilde{g}^{(K)}$ be independent \reinforce
estimators computed using the biased rewards $\tilde{r}(s_t,a_t)$, and define
$\tilde{g} := \frac{1}{K}\sum_{k=1}^K \tilde{g}^{(k)}$.
Define $B_H(\theta) :=
\mathbb{E}\!\left[\sum_{t=0}^{H-1}\gamma^t b(s_t,a_t)\right]$, where the
expectation is over trajectories generated by executing $\pi_\theta$ under the
true dynamics $f$. Then
\begin{equation}
\label{eq:biased_reward_mean}
\mathbb{E}[\tilde{g}]
= \nabla_\theta J_H(\pi_\theta,\tilde{\mathcal{M}})
= g_H + \nabla_\theta B_H(\theta).
\end{equation}
If $\mathrm{Var}[\tilde{g}^{(1)}] < \infty$, then
\begin{equation}
\label{eq:biased_reward_mse}
\mathbb{E}\!\left[\|\tilde{g}-g_H\|^2\right]
= \frac{1}{K}\mathrm{Var}[\tilde{g}^{(1)}]
  + \|\nabla_\theta B_H(\theta)\|^2.
\end{equation}
Consequently, when $\nabla_\theta B_H(\theta) \neq 0$, averaging over more
trajectories reduces the variance term but does not remove the bias as an
estimator of $g_H$.
\end{restatable}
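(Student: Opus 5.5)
The plan is to treat the biased-reward estimator as an ordinary noise-free \reinforce estimator for the modified \mdp $\tilde{\mathcal{M}}$, and then read off both claims from that identification.

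\textbf{Mean identity.} First I observe that $\tilde{g}^{(1)} = \sum_{t=0}^{H-1}\nabla_\theta\log\pi_\theta(a_t\mid s_t)\,\tilde{G}_t$, with $\tilde{G}_t$ built from $\tilde{r}$, is exactly the \reinforce estimator for $\tilde{\mathcal{M}}$. Trajectories are still sampled under the true dynamics $f$ and $\pi_\theta$, and $\tilde{\mathcal{M}}$ shares $f$ with $\mathcal{M}$, so the trajectory distribution is the same. The standard finite-horizon policy-gradient identity (the same one underlying $\mathbb{E}[\hat{g}]=g_H$ in \Cref{thm:noise} with $\eta\equiv 0$) then gives $\mathbb{E}[\tilde{g}^{(1)}]=\nabla_\theta J_H(\pi_\theta,\tilde{\mathcal{M}})$. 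This identity rests on the likelihood-ratio trick plus the causality argument that $\nabla_\theta\log\pi_\theta(a_t\mid s_t)$ has zero conditional mean against rewards at times $t'<t$. The $K$ estimators are i.i.d., so averaging preserves the mean.

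\textbf{Splitting off the bias.} Linearity of expectation gives $J_H(\pi_\theta,\tilde{\mathcal{M}})=J_H(\pi_\theta,\mathcal{M})+B_H(\theta)$. Differentiating yields $g_H+\nabla_\theta B_H(\theta)$. The main technical point is the interchange of gradient and expectation, i.e.\ differentiability of $B_H$. I will inherit it from the same regularity assumptions already used for $g_H$, applied to the reward $b=\tilde{r}-r$. Concretely, I would argue that $\nabla_\theta J_H(\pi_\theta,\tilde{\mathcal{M}})$ and $g_H$ both exist under the standing policy-gradient assumptions, so their difference $\nabla_\theta B_H$ exists as well.

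\textbf{MSE decomposition.} Write $\tilde{g}-g_H=(\tilde{g}-\mathbb{E}\tilde{g})+\nabla_\theta B_H(\theta)$ and expand the squared norm. The cross term vanishes because $\mathbb{E}[\tilde{g}-\mathbb{E}\tilde{g}]=0$ and $\nabla_\theta B_H$ is deterministic. Then $\mathbb{E}\|\tilde{g}-\mathbb{E}\tilde{g}\|^2=\mathrm{Var}[\tilde{g}]$, by the coordinate-sum definition of $\mathrm{Var}$. By independence of the $\tilde{g}^{(k)}$, applied coordinatewise, this equals $\frac1K\mathrm{Var}[\tilde{g}^{(1)}]$; the finiteness assumption makes every term well defined.

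\textbf{Conclusion.} The final claim follows immediately: the first term decays as $1/K$, while the second is independent of $K$ and strictly positive whenever $\nabla_\theta B_H\neq0$. I expect no real obstacle beyond stating the regularity needed for the interchange of gradient and expectation.
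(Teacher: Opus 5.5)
Your proposal is correct and follows essentially the same route as the paper. You identify $\tilde{g}^{(1)}$ as the \reinforce estimator for $\tilde{\mathcal{M}}$, split $J_H(\pi_\theta,\tilde{\mathcal{M}}) = J_H(\pi_\theta,\mathcal{M}) + B_H(\theta)$ by linearity, and finish with the bias--variance decomposition and $\mathrm{Var}[\tilde{g}] = \frac{1}{K}\mathrm{Var}[\tilde{g}^{(1)}]$ for i.i.d.\ estimators; your remark that $\nabla_\theta B_H$ exists because both $J_H(\pi_\theta,\tilde{\mathcal{M}})$ and $J_H(\pi_\theta,\mathcal{M})$ are differentiable is a bit of care the paper leaves implicit.
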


\begin{proof}
See \Cref{app:proof_biased_reward}.
\end{proof}

\section{Discussion}
\label{sec:discussion}

Although the dynamics $f:\mathcal{S}\times\mathcal{A}\to\mathcal{S}$ and reward $r:\mathcal{S}\times\mathcal{A}\to\mathbb{R}$ depend only on the current state and action, this does not require the underlying environment to be memoryless: any dependence on past states and actions can be absorbed into $\mathcal{S}$ by taking the state to encode a summary of the past, e.g.\ via recurrent neural networks \citep{rumelhart1986learning, elman1990finding, hochreiter1997long, henaff2016recurrent}.

\section{Acknowledgments}
Nadav Timor thanks GitHub for Startups and Lambda AI for generous financial support.
Micah Goldblum was supported by Dream Sports and the Google Research Award.
David Harel was supported by a research grant from Magnus Konow in honour of his mother Olga Konow Rappaport.

\bibliographystyle{plainnat}
\bibliography{references}

@article{williams1992simple,
  title={Simple statistical gradient-following algorithms for connectionist reinforcement learning},
  author={Williams, Ronald J},
  journal={Machine learning},
  volume={8},
  number={3},
  pages={229--256},
  year={1992},
  publisher={Springer}
}

@inproceedings{asadi2018lipschitz,
  title={Lipschitz continuity in model-based reinforcement learning},
  author={Asadi, Kavosh and Misra, Dipendra and Littman, Michael},
  booktitle={International conference on machine learning},
  pages={264--273},
  year={2018},
  organization={PMLR}
}

@article{hafner2025mastering,
  title={Mastering diverse control tasks through world models},
  author={Hafner, Danijar and Pasukonis, Jurgis and Ba, Jimmy and Lillicrap, Timothy},
  journal={Nature},
  volume={640},
  number={8059},
  pages={647--653},
  year={2025},
  publisher={Nature Publishing Group UK London}
}

@article{hafner2025training,
  title={Training agents inside of scalable world models},
  author={Hafner, Danijar and Yan, Wilson and Lillicrap, Timothy},
  journal={arXiv preprint arXiv:2509.24527},
  year={2025}
}

@article{janner2019trust,
  title={When to trust your model: Model-based policy optimization},
  author={Janner, Michael and Fu, Justin and Zhang, Marvin and Levine, Sergey},
  journal={Advances in neural information processing systems},
  volume={32},
  year={2019}
}

@incollection{sutton1990integrated,
  title={Integrated architectures for learning, planning, and reacting based on approximating dynamic programming},
  author={Sutton, Richard S},
  booktitle={Machine learning proceedings 1990},
  pages={216--224},
  year={1990},
  publisher={Elsevier}
}

@article{rumelhart1986learning,
  title={Learning representations by back-propagating errors},
  author={Rumelhart, David E and Hinton, Geoffrey E and Williams, Ronald J},
  journal={nature},
  volume={323},
  number={6088},
  pages={533--536},
  year={1986},
  publisher={Nature Publishing Group UK London}
}

@article{elman1990finding,
  title={Finding structure in time},
  author={Elman, Jeffrey L},
  journal={Cognitive science},
  volume={14},
  number={2},
  pages={179--211},
  year={1990},
  publisher={Wiley Online Library}
}

@article{hochreiter1997long,
  title={Long short-term memory},
  author={Hochreiter, Sepp and Schmidhuber, J{\"u}rgen},
  journal={Neural computation},
  volume={9},
  number={8},
  pages={1735--1780},
  year={1997},
  publisher={MIT press}
}

@inproceedings{henaff2016recurrent,
  title={Recurrent orthogonal networks and long-memory tasks},
  author={Henaff, Mikael and Szlam, Arthur and LeCun, Yann},
  booktitle={International Conference on Machine Learning},
  pages={2034--2042},
  year={2016},
  organization={PMLR}
}

@article{wang2026temporal,
  title={Temporal straightening for latent planning},
  author={Wang, Ying and Bounou, Oumayma and Zhou, Gaoyue and Balestriero, Randall and Rudner, Tim GJ and LeCun, Yann and Ren, Mengye},
  journal={arXiv preprint arXiv:2603.12231},
  year={2026}
}

@article{lecun2022path,
  title={A path towards autonomous machine intelligence version 0.9. 2, 2022-06-27},
  author={LeCun, Yann and others},
  journal={Open Review},
  volume={62},
  number={1},
  pages={1--62},
  year={2022}
}

@article{munos2003,
  title={Error bounds for approximate policy iteration},
  author={Munos, R{\'e}mi},
  journal={Proceedings of the International Conference on Machine Learning (ICML)},
  pages={560--567},
  year={2003}
}

@inproceedings{kakade2002,
  title={Approximately optimal approximate reinforcement learning},
  author={Kakade, Sham and Langford, John},
  booktitle={Proceedings of the International Conference on Machine Learning (ICML)},
  pages={267--274},
  year={2002}
}

@article{ha2018world,
  title={World models},
  author={Ha, David and Schmidhuber, J{\"u}rgen},
  journal={arXiv preprint arXiv:1803.10122},
  volume={2},
  number={3},
  pages={440},
  year={2018}
}

@inproceedings{
Hafner2020Dream,
title={Dream to Control: Learning Behaviors by Latent Imagination},
author={Danijar Hafner and Timothy Lillicrap and Jimmy Ba and Mohammad Norouzi},
booktitle={International Conference on Learning Representations},
year={2020},
url={https://openreview.net/forum?id=S1lOTC4tDS}
}

@article{schrittwieser2020mastering,
  title={Mastering atari, go, chess and shogi by planning with a learned model},
  author={Schrittwieser, Julian and Antonoglou, Ioannis and Hubert, Thomas and Simonyan, Karen and Sifre, Laurent and Schmitt, Simon and Guez, Arthur and Lockhart, Edward and Hassabis, Demis and Graepel, Thore and others},
  journal={Nature},
  volume={588},
  number={7839},
  pages={604--609},
  year={2020},
  publisher={Nature Publishing Group UK London}
}

@inproceedings{deisenroth2011pilco,
  title={PILCO: A model-based and data-efficient approach to policy search},
  author={Deisenroth, Marc and Rasmussen, Carl E},
  booktitle={Proceedings of the 28th International Conference on machine learning (ICML-11)},
  pages={465--472},
  year={2011}
}

@InProceedings{hansen2022temporal,
  title = 	 {Temporal Difference Learning for Model Predictive Control},
  author =       {Hansen, Nicklas A and Su, Hao and Wang, Xiaolong},
  booktitle = 	 {Proceedings of the 39th International Conference on Machine Learning},
  pages = 	 {8387--8406},
  year = 	 {2022},
  editor = 	 {Chaudhuri, Kamalika and Jegelka, Stefanie and Song, Le and Szepesvari, Csaba and Niu, Gang and Sabato, Sivan},
  volume = 	 {162},
  series = 	 {Proceedings of Machine Learning Research},
  month = 	 {17--23 Jul},
  publisher =    {PMLR},
  url = 	 {https://proceedings.mlr.press/v162/hansen22a.html}
}

@article{kearns2002near,
  title={Near-optimal reinforcement learning in polynomial time},
  author={Kearns, Michael and Singh, Satinder},
  journal={Machine learning},
  volume={49},
  number={2},
  pages={209--232},
  year={2002},
  publisher={Springer}
}

@inproceedings{farahmand2017value,
  title={Value-aware loss function for model-based reinforcement learning},
  author={Farahmand, Amir-massoud and Barreto, Andre and Nikovski, Daniel},
  booktitle={Artificial Intelligence and Statistics},
  pages={1486--1494},
  year={2017},
  organization={PMLR}
}

@article{asadi2018equivalence,
  title={Equivalence between wasserstein and value-aware loss for model-based reinforcement learning},
  author={Asadi, Kavosh and Cater, Evan and Misra, Dipendra and Littman, Michael L},
  journal={arXiv preprint arXiv:1806.01265},
  year={2018}
}

@inproceedings{talvitie2018learning,
  title={Learning the reward function for a misspecified model},
  author={Talvitie, Erik},
  booktitle={International Conference on Machine Learning},
  pages={4838--4847},
  year={2018},
  organization={PMLR}
}

@inproceedings{
lobel2024optimal,
title={An Optimal Tightness Bound for the Simulation Lemma},
author={Sam Lobel and Ronald Parr},
booktitle={Reinforcement Learning Conference},
year={2024},
url={https://openreview.net/forum?id=RcoIAfiM5g}
}

@article{kaplan2020scaling,
  title={Scaling laws for neural language models},
  author={Kaplan, Jared and McCandlish, Sam and Henighan, Tom and Brown, Tom B and Chess, Benjamin and Child, Rewon and Gray, Scott and Radford, Alec and Wu, Jeffrey and Amodei, Dario},
  journal={arXiv preprint arXiv:2001.08361},
  year={2020}
}

@article{hoffmann2022training,
  title={Training compute-optimal large language models},
  author={Hoffmann, Jordan and Borgeaud, Sebastian and Mensch, Arthur and Buchatskaya, Elena and Cai, Trevor and Rutherford, Eliza and Casas, DDL and Hendricks, Lisa Anne and Welbl, Johannes and Clark, Aidan and others},
  journal={arXiv preprint arXiv:2203.15556},
  volume={10},
  year={2022}
}

@article{hilton2023scaling,
  title={Scaling laws for single-agent reinforcement learning},
  author={Hilton, Jacob and Tang, Jie and Schulman, John},
  journal={arXiv preprint arXiv:2301.13442},
  year={2023}
}

@inproceedings{pearce2025scaling,
  title={Scaling Laws for Pre-training Agents and World Models},
  author={Pearce, Tim and Rashid, Tabish and Bignell, David and Georgescu, Raluca and Devlin, Sam and Hofmann, Katja},
  booktitle={International Conference on Machine Learning},
  pages={48542--48562},
  year={2025},
  organization={PMLR}
}

@inproceedings{jin2020reward,
  title={Reward-free exploration for reinforcement learning},
  author={Jin, Chi and Krishnamurthy, Akshay and Simchowitz, Max and Yu, Tiancheng},
  booktitle={International Conference on Machine Learning},
  pages={4870--4879},
  year={2020},
  organization={PMLR}
}

@article{bellinger2020active,
  title={Active Measure Reinforcement Learning for Observation Cost Minimization},
  author={Bellinger, Colin and Coles, Rory and Crowley, Mark and Tamblyn, Isaac},
  journal={arXiv preprint arXiv:2005.12697},
  year={2020}
}

@article{huang2022efficient,
  title={An Efficient Simulation-Based Policy Improvement with Optimal Computing Budget Allocation Based on Accumulated Samples},
  author={Huang, Xilang and Choi, Seon Han},
  journal={Electronics},
  volume={11},
  number={7},
  pages={1141},
  year={2022},
  publisher={MDPI}
}

@article{sutton1999policy,
  title={Policy gradient methods for reinforcement learning with function approximation},
  author={Sutton, Richard S and McAllester, David and Singh, Satinder and Mansour, Yishay},
  journal={Advances in neural information processing systems},
  volume={12},
  year={1999}
}

@article{greensmith2004variance,
  title={Variance reduction techniques for gradient estimates in reinforcement learning},
  author={Greensmith, Evan and Bartlett, Peter L and Baxter, Jonathan},
  journal={Journal of Machine Learning Research},
  volume={5},
  number={Nov},
  pages={1471--1530},
  year={2004}
}

@inproceedings{zhang2021robust,
  title={Robust policy gradient against strong data corruption},
  author={Zhang, Xuezhou and Chen, Yiding and Zhu, Xiaojin and Sun, Wen},
  booktitle={International Conference on Machine Learning},
  pages={12391--12401},
  year={2021},
  organization={PMLR}
}

@article{cai2025reinforcement,
  title={Reinforcement learning with verifiable yet noisy rewards under imperfect verifiers},
  author={Cai, Xin-Qiang and Wang, Wei and Liu, Feng and Liu, Tongliang and Niu, Gang and Sugiyama, Masashi},
  journal={arXiv preprint arXiv:2510.00915},
  year={2025}
}

@article{han2026non,
  title={Non-Uniform Noise-to-Signal Ratio in the REINFORCE Policy-Gradient Estimator},
  author={Han, Haoyu and Yang, Heng},
  journal={arXiv preprint arXiv:2602.01460},
  year={2026}
}

@article{christiano2017deep,
  title={Deep reinforcement learning from human preferences},
  author={Christiano, Paul F and Leike, Jan and Brown, Tom and Martic, Miljan and Legg, Shane and Amodei, Dario},
  journal={Advances in neural information processing systems},
  volume={30},
  year={2017}
}

@article{stiennon2020learning,
  title={Learning to summarize with human feedback},
  author={Stiennon, Nisan and Ouyang, Long and Wu, Jeffrey and Ziegler, Daniel and Lowe, Ryan and Voss, Chelsea and Radford, Alec and Amodei, Dario and Christiano, Paul F},
  journal={Advances in neural information processing systems},
  volume={33},
  pages={3008--3021},
  year={2020}
}

@inproceedings{gao2023scaling,
  title={Scaling laws for reward model overoptimization},
  author={Gao, Leo and Schulman, John and Hilton, Jacob},
  booktitle={International Conference on Machine Learning},
  pages={10835--10866},
  year={2023},
  organization={PMLR}
}

@inproceedings{assran2023self,
  title={Self-supervised learning from images with a joint-embedding predictive architecture},
  author={Assran, Mahmoud and Duval, Quentin and Misra, Ishan and Bojanowski, Piotr and Vincent, Pascal and Rabbat, Michael and LeCun, Yann and Ballas, Nicolas},
  booktitle={Proceedings of the IEEE/CVF conference on computer vision and pattern recognition},
  pages={15619--15629},
  year={2023}
}

@inproceedings{miyato2018spectral,
  title={Spectral Normalization for Generative Adversarial Networks},
  author={Miyato, Takeru and Kataoka, Toshiki and Koyama, Masanori and Yoshida, Yuichi},
  booktitle={International Conference on Learning Representations},
  year={2018}
}

@article{gouk2021regularisation,
  title={Regularisation of neural networks by enforcing lipschitz continuity},
  author={Gouk, Henry and Frank, Eibe and Pfahringer, Bernhard and Cree, Michael J},
  journal={Machine Learning},
  volume={110},
  number={2},
  pages={393--416},
  year={2021},
  publisher={Springer}
}


\appendix

\section{Additional related work}
\label{app:additional_related_work}

This appendix collects related works that inform the setting of \cref{sec:related_work} but are not load-bearing for the novelty claims of \cref{lem:decomposed,thm:optimal_split,thm:noise,cor:fidelity,prop:biased_reward,cor:lipschitz_tightens_bound,prop:temporal_straightening}.

\paragraph{Latent-state world models.}
Beyond the works cited in \cref{sec:related_work}, several lines of work shape the modern training-in-imagination paradigm. \citet{deisenroth2011pilco} fit a Gaussian-process dynamics model and propagate uncertainty through imagined trajectories under a known reward. \citet{ha2018world} learn a latent dynamics model from pixels and train a policy primarily inside this learned model. Dreamer \citep{Hafner2020Dream} learns a latent-space dynamics model together with a learned reward predictor and optimizes the policy entirely on imagined latent rollouts. \citet{schrittwieser2020mastering} learn a value-equivalent latent model whose dynamics, reward, and value predictions are trained jointly to support planning, and \citet{hansen2022temporal} combine a learned latent dynamics and reward model with short-horizon planning and a learned terminal value.

\paragraph{Simulation-style bounds and value-aware model learning.}
The simulation-lemma family extends in several directions. \citet{kakade2002} express the difference in expected discounted return between two policies as an expectation of single-step advantages under one of them, which underpins conservative policy iteration. \citet{munos2003} gives $L_p$-style error-propagation bounds for approximate policy iteration, and \citet{lobel2024optimal} recently establish optimal tightness for the simulation lemma. A parallel line of work asks whether the model loss should target value prediction rather than raw transition accuracy: value-aware model learning \citep{farahmand2017value} replaces the next-state likelihood objective with a loss measuring the worst-case discrepancy between the true and learned dynamics on expected values over a class of value functions, \citet{asadi2018equivalence} show that, restricted to a 1-Lipschitz value-function class, this loss is equivalent to the Wasserstein distance between the dynamics, and \citet{talvitie2018learning} addresses how to learn the reward function on states drawn from a misspecified dynamics model.

\paragraph{Active observation and simulation budget allocation.}
Active-measure reinforcement learning chooses when to pay for an observation under explicit observation costs \citep{bellinger2020active}. In the simulation-optimization literature, optimal computing budget allocation divides a fixed simulation budget across candidate policies to maximize the probability of selecting the best one \citep{huang2022efficient}. Power-law fits analogous to those of \citet{kaplan2020scaling} and \citet{hoffmann2022training} have since been reported for single-agent reinforcement learning and for pre-training of agents and world models \citep{hilton2023scaling, pearce2025scaling}.

\paragraph{Policy-gradient theory and reward modeling.}
Beyond \citet{williams1992simple}, the policy-gradient theorem of \citet{sutton1999policy} and the variance-reduction analysis of \citet{greensmith2004variance} provide the standard analytic toolkit invoked by \cref{thm:noise}. \citet{han2026non} characterize how the \reinforce noise-to-signal ratio varies non-uniformly across the parameter landscape. Reinforcement learning from human feedback trains reward models from preferences \citep{christiano2017deep, stiennon2020learning}, the empirical setting in which \citet{gao2023scaling} document Goodhart-style overoptimization.

\paragraph{Representation learning and Lipschitz regularization.}
\citet{assran2023self} provide a concrete image instantiation of joint-embedding predictive architectures, and \citet{gouk2021regularisation} propose projection-based mechanisms for enforcing Lipschitz continuity during training, with upper bounds applicable to multiple $p$-norms beyond the spectral norm.

\section{Full proofs}
\label{app:proofs}

\subsection{Proof of Lemma~\ref{lem:decomposed}}
\label{app:proof_decomposed}

\decomposedlemma*

\begin{proof}
Fix an $L_\pi$-Lipschitz policy $\pi$ and a shared initial state
$s_0 = \hat{s}_0$.
Let $\{s_t\}$ denote the state trajectory generated by executing $\pi$ under the true dynamics $f$, and $\{\hat{s}_t\}$ the trajectory under $\hat{f}$, both starting from $s_0$.
The actions differ because the policy is evaluated at different states:
\begin{align*}
a_t = \pi(s_t), \qquad \hat{a}_t = \pi(\hat{s}_t).
\end{align*}

At each step $t$:
\begin{align}
r(s_t, a_t) - \hat{r}(\hat{s}_t, \hat{a}_t)
&= \bigl[r(s_t, a_t) - r(\hat{s}_t, \hat{a}_t)\bigr]
   + \bigl[r(\hat{s}_t, \hat{a}_t) - \hat{r}(\hat{s}_t, \hat{a}_t)\bigr]. \label{eq:per_step_decomp}
\end{align}

Let $\Lcomp := L_f(1 + L_\pi)$.
By induction:
\begin{equation*}
\|s_t - \hat{s}_t\| \leq \edyn \sum_{k=0}^{t-1} \Lcomp^k
\end{equation*}

\emph{Base case} ($t=0$): $s_0 = \hat{s}_0$, so $\|s_0 - \hat{s}_0\| = 0$.

\emph{Inductive step}: Assume $\|s_t - \hat{s}_t\| \leq \edyn \sum_{k=0}^{t-1} \Lcomp^k$.
\begin{align}
\|s_{t+1} - \hat{s}_{t+1}\|
&= \|f(s_t, a_t) - \hat{f}(\hat{s}_t, \hat{a}_t)\| \notag \\
&\leq \|f(s_t, a_t) - f(\hat{s}_t, \hat{a}_t)\| + \|f(\hat{s}_t, \hat{a}_t) - \hat{f}(\hat{s}_t, \hat{a}_t)\| \notag \\
&\leq L_f\bigl(\|s_t - \hat{s}_t\| + \|a_t - \hat{a}_t\|\bigr) + \edyn \notag \\
&= L_f\bigl(\|s_t - \hat{s}_t\| + \|\pi(s_t) - \pi(\hat{s}_t)\|\bigr) + \edyn \notag \\
&\leq L_f\bigl(\|s_t - \hat{s}_t\| + L_\pi\|s_t - \hat{s}_t\|\bigr) + \edyn \notag \\
&\leq L_f(1 + L_\pi)\|s_t - \hat{s}_t\| + \edyn \notag \\
&= \Lcomp\|s_t - \hat{s}_t\| + \edyn \notag \\
&\leq \Lcomp\left(\edyn \sum_{k=0}^{t-1} \Lcomp^k\right) + \edyn \notag \\
&= \edyn \sum_{k=0}^{t-1} \Lcomp^{k+1} + \edyn \notag \\
&= \edyn \sum_{k=0}^{t-1} \Lcomp^{k+1} + \edyn \Lcomp^0 \notag \\
&= \edyn \sum_{j=1}^{t} \Lcomp^j + \edyn \Lcomp^0 \notag \\
&= \edyn \sum_{k=0}^{t} \Lcomp^k. \notag
\end{align}

\emph{Error from using the learned reward function:}
$|r(\hat{s}_t, \hat{a}_t) - \hat{r}(\hat{s}_t, \hat{a}_t)| \leq \erew$, by the definition of $\erew$.

\emph{Error from evaluating the true reward at different state-action pairs:}
\begin{align}
|r(s_t, a_t) - r(\hat{s}_t, \hat{a}_t)|
&\leq L_r\bigl(\|s_t - \hat{s}_t\| + \|a_t - \hat{a}_t\|\bigr) \notag \\
&= L_r\bigl(\|s_t - \hat{s}_t\| + \|\pi(s_t) - \pi(\hat{s}_t)\|\bigr) \notag \\
&\leq L_r\bigl(\|s_t - \hat{s}_t\| + L_\pi\|s_t - \hat{s}_t\|\bigr) \notag \\
&= L_r(1 + L_\pi)\|s_t - \hat{s}_t\| \notag \\
&\leq L_r(1+L_\pi) \edyn \sum_{k=0}^{t-1} \Lcomp^k. \notag
\end{align}

Using the definition of discounted return and \cref{eq:per_step_decomp},
\begin{align}
|J(\pi, \mathcal{M}) - J(\pi, \hat{\mathcal{M}})|
&= \left|\sum_{t=0}^{\infty} \gamma^t \bigl(r(s_t, a_t) - \hat{r}(\hat{s}_t, \hat{a}_t)\bigr)\right| \notag \\
&\leq \sum_{t=0}^{\infty} \gamma^t \left|r(s_t, a_t) - \hat{r}(\hat{s}_t, \hat{a}_t)\right| \notag \\
&= \sum_{t=0}^{\infty} \gamma^t \left|\bigl(r(s_t, a_t) - r(\hat{s}_t, \hat{a}_t)\bigr)
    + \bigl(r(\hat{s}_t, \hat{a}_t) - \hat{r}(\hat{s}_t, \hat{a}_t)\bigr)\right| \notag \\
&\leq \sum_{t=0}^{\infty} \gamma^t \left|r(s_t, a_t) - r(\hat{s}_t, \hat{a}_t)\right|
    + \sum_{t=0}^{\infty} \gamma^t \left|r(\hat{s}_t, \hat{a}_t) - \hat{r}(\hat{s}_t, \hat{a}_t)\right| \notag \\
&\leq L_r(1+L_\pi) \edyn \sum_{t=0}^{\infty} \gamma^t \left(\sum_{k=0}^{t-1} \Lcomp^k\right)
    + \erew \sum_{t=0}^{\infty} \gamma^t \notag \\
&= L_r(1+L_\pi) \edyn \sum_{t=0}^{\infty}\sum_{k=0}^{t-1} \gamma^t \Lcomp^k
    + \frac{1}{1 - \gamma}\,\erew \notag \\
&= L_r(1+L_\pi) \edyn \sum_{k=0}^{\infty} \sum_{\substack{t=0 \\ k \le t-1}}^{\infty} \gamma^t \Lcomp^k
    + \frac{1}{1 - \gamma}\,\erew \notag \\
&= L_r(1+L_\pi) \edyn \sum_{k=0}^{\infty} \sum_{t=k+1}^{\infty} \gamma^t \Lcomp^k
    + \frac{1}{1 - \gamma}\,\erew \notag \\
&= L_r(1+L_\pi) \edyn \sum_{k=0}^{\infty} \sum_{j=0}^{\infty} \gamma^{k+1+j}\Lcomp^k
    + \frac{1}{1 - \gamma}\,\erew \notag \\
&= L_r(1+L_\pi) \edyn \sum_{k=0}^{\infty} \left(\gamma^{k+1}\sum_{j=0}^{\infty} \gamma^j\right)\Lcomp^k
    + \frac{1}{1 - \gamma}\,\erew \notag \\
&= \frac{\gamma L_r(1+L_\pi)}{1-\gamma}\,\edyn \sum_{k=0}^{\infty} (\gamma \Lcomp)^k
    + \frac{1}{1 - \gamma}\,\erew \notag \\
\intertext{Because $\gamma\Lcomp < 1$, the geometric series converges.}
&= \frac{\gamma L_r(1+L_\pi)}{1-\gamma}\,\edyn \left(\frac{1}{1-\gamma\Lcomp}\right)
    + \frac{1}{1 - \gamma}\,\erew \notag \\
&= \frac{1}{1 - \gamma}\,\erew + \splitcoef\,\edyn \notag
\end{align}
\end{proof}

\subsection[Proof of the temporal straightening proposition]{Proof of \Cref{prop:temporal_straightening}}
\label{app:proof_temporal_straightening}

\straighteningthm*

\begin{proof}
Fix $t$ such that $v(z_t)\neq 0$ and $v(z_{t+1})\neq 0$, and set
\begin{equation*}
a:=v(z_t),
\qquad
b:=v(z_{t+1}).
\end{equation*}
Along the rollout, $a=z_{t+1}-z_t$ and $b=z_{t+2}-z_{t+1}$.
Using the $\varepsilon$-Lipschitz condition for $v$ gives
\begin{align*}
\|b-a\|
&= \|v(z_{t+1})-v(z_t)\| \\
&\leq \varepsilon\|z_{t+1}-z_t\| \\
&= \varepsilon\|a\|.
\end{align*}

By the triangle inequality,
\begin{align*}
\|b\|
&= \|a+(b-a)\| \\
&\geq \|a\|-\|b-a\| \\
&\geq (1-\varepsilon)\|a\|.
\end{align*}
Define
\begin{equation*}
C := \frac{a^\top b}{\|a\|\|b\|}.
\end{equation*}
Then
\begin{align*}
\|b-a\|^2
&= \|a\|^2+\|b\|^2-2a^\top b \\
&= \|a\|^2+\|b\|^2-2C\|a\|\|b\| \\
&= (\|a\|-\|b\|)^2 + 2\|a\|\|b\|(1-C).
\end{align*}
Rearranging and using $(\|a\|-\|b\|)^2\geq 0$ yields
\begin{align*}
1-C
&= \frac{\|b-a\|^2-(\|a\|-\|b\|)^2}{2\|a\|\|b\|} \\
&\leq \frac{\|b-a\|^2}{2\|a\|\|b\|}.
\end{align*}
Substituting $\|b-a\|\leq \varepsilon\|a\|$ and $\|b\|\geq (1-\varepsilon)\|a\|$ gives
\begin{align*}
1-C
&\leq \frac{\varepsilon^2\|a\|^2}{2\|a\|(1-\varepsilon)\|a\|} \\
&= \frac{\varepsilon^2}{2(1-\varepsilon)}.
\end{align*}
Since $1-C=\mathcal{L}_{\mathrm{curv}}(t)$ by definition, this proves the curvature-loss bound.

The function $\varepsilon \mapsto \varepsilon^2/(2(1-\varepsilon))$ is increasing on $(0,1)$ because
\begin{equation*}
\frac{d}{d\varepsilon}\frac{\varepsilon^2}{2(1-\varepsilon)}
= \frac{\varepsilon(2-\varepsilon)}{2(1-\varepsilon)^2}>0.
\end{equation*}
Therefore lowering the Lipschitz constant of $v$ lowers the upper bound whenever the Lipschitz constant remains in $(0,1)$.
\end{proof}

\subsection{Proof of \Cref{thm:optimal_split}}
\label{app:proof_optimal_split}

\optimalsplitthm*

\begin{proof}
For sample counts $(\Ndyn, \Nrew)$, \Cref{eq:error_scaling} gives
\begin{equation*}
\erew(\Nrew) = A_r \Nrew^{-\beta},
\qquad
\edyn(\Ndyn) = A_d \Ndyn^{-\alpha}.
\end{equation*}
Substituting these identities into the upper bound from
\Cref{eq:decomposed_bound} gives
\begin{equation*}
\frac{A_r}{1-\gamma}\,\Nrew^{-\beta}
+ \splitcoef\,A_d\,\Ndyn^{-\alpha}.
\end{equation*}
Define the sample-count-independent coefficients by
\begin{equation*}
C_r := \frac{A_r}{1-\gamma},
\qquad
C_d := \frac{\gamma L_r(1+L_\pi) A_d}{(1-\gamma)(1-\gamma L_f(1+L_\pi))},
\end{equation*}
so the upper bound after substitution becomes
\begin{equation*}
\mathcal{L}_{\mathrm{bound}}(\Ndyn, \Nrew)
:= C_r \Nrew^{-\beta} + C_d \Ndyn^{-\alpha}.
\end{equation*}
For the fixed budget $B$, the feasible sample counts are exactly the pairs
$(\Ndyn, \Nrew)$ satisfying $\cdyn \Ndyn + \crew \Nrew = B$. Therefore
minimizing the upper bound from \Cref{eq:decomposed_bound} over all feasible
sample counts is equivalent to minimizing
$\mathcal{L}_{\mathrm{bound}}(\Ndyn, \Nrew)$ subject to the same budget
constraint.

Because $\alpha,\beta > 0$, the objective decreases as either sample count
increases, so the minimizer satisfies $\Ndyn^* > 0$ and $\Nrew^* > 0$. Define
the Lagrangian
\begin{equation*}
\Lambda(\Ndyn, \Nrew, \lambda)
:= C_d \Ndyn^{-\alpha} + C_r \Nrew^{-\beta}
+ \lambda(\cdyn \Ndyn + \crew \Nrew - B).
\end{equation*}
Because the minimizing counts satisfy $\Ndyn^* > 0$ and $\Nrew^* > 0$,
setting the partial derivatives of $\Lambda$ with respect to $\Ndyn$ and
$\Nrew$ equal to zero gives
\begin{align*}
\alpha\, C_d\, \Ndyn^{-\alpha - 1} &= \lambda\, \cdyn, \\
\beta\, C_r\, \Nrew^{-\beta - 1} &= \lambda\, \crew.
\end{align*}

Multiply the equation for $\Ndyn$ by $\Ndyn$ and the equation for $\Nrew$ by
$\Nrew$:
\begin{align*}
\alpha\, C_d\, \Ndyn^{-\alpha} &= \lambda\, \cdyn \Ndyn, \\
\beta\, C_r\, \Nrew^{-\beta} &= \lambda\, \crew \Nrew.
\end{align*}
Now divide the equation for $\Ndyn$ by the equation for $\Nrew$:
\begin{align*}
\frac{\alpha\, C_d\, \Ndyn^{-\alpha}}{\beta\, C_r\, \Nrew^{-\beta}}
&= \frac{\cdyn \Ndyn}{\crew \Nrew}.
\end{align*}
Rearranging this identity gives
\begin{equation}
\label{eq:proof_sample_ratio}
\frac{\Ndyn}{\Nrew}
= \frac{\alpha}{\beta} \cdot \frac{\crew}{\cdyn}
   \cdot \frac{C_d \Ndyn^{-\alpha}}{C_r \Nrew^{-\beta}}.
\end{equation}
Evaluate \Cref{eq:proof_sample_ratio} at the minimizing counts
$(\Ndyn^*, \Nrew^*)$.
Using the definition of $C_d$ together with
\cref{eq:error_scaling}, we obtain
\begin{align*}
C_d (\Ndyn^*)^{-\alpha}
&= \splitcoef\,A_d (\Ndyn^*)^{-\alpha} \\
&= \splitcoef\,\edyn^*, \\
C_r (\Nrew^*)^{-\beta}
&= \frac{A_r}{1-\gamma} (\Nrew^*)^{-\beta} \\
&= \frac{1}{1-\gamma}\,\erew^*.
\end{align*}
Substituting these two identities yields
\begin{align*}
\frac{\Ndyn^*}{\Nrew^*}
&= \frac{\alpha}{\beta} \cdot (1-\gamma)\splitcoef
   \cdot \frac{\crew}{\cdyn} \cdot \frac{\edyn^*}{\erew^*} \\
&= \frac{\alpha}{\beta} \cdot \frac{\gamma L_r(1+L_\pi)}{1-\gamma L_f(1+L_\pi)}
   \cdot \frac{\crew}{\cdyn} \cdot \frac{\edyn^*}{\erew^*}.
\end{align*}

Because $x \mapsto x^{-\alpha}$ and $x \mapsto x^{-\beta}$ are convex on
$(0,\infty)$ for $\alpha,\beta > 0$, the objective is convex on the feasible
set. Therefore the feasible point satisfying the derivative equations for
$\Ndyn$ and $\Nrew$ is the unique minimizer.
\end{proof}

\subsection{Experiment details: empirical calibration of the decomposed bound}
\label{app:lemma_calibration_details}

This appendix collects the protocol and per-configuration construction for the calibration experiment in \Cref{sec:lemma_calibration}, whose main-text result is summarized in \Cref{fig:lemma_calibration}.

\begin{figure}[t]
  \centering
  \includegraphics[width=0.85\linewidth]{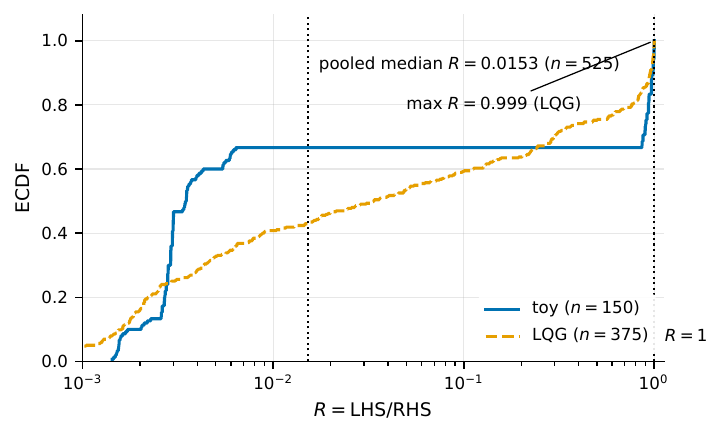}
  \caption{The bound of \cref{eq:decomposed_bound} holds across all $n=525$ test configurations on both benchmarks.
  The empirical CDF of $R := \mathrm{LHS}/\mathrm{RHS}$ stays at or below $R = 1$ on every configuration, supporting the use of \Cref{lem:decomposed} as a hypothesis of \Cref{thm:optimal_split}; full per-benchmark statistics are in this appendix.}
  \label{fig:lemma_calibration}
\end{figure}

Each configuration is a tuple $(f, r, \pi, \hat f, \hat r)$, where $(f, r)$ is an \mdp instance, $\pi$ is an $L_\pi$-Lipschitz evaluation policy, and $(\hat f, \hat r)$ is the perturbed model pair. The realized per-step errors $\edyn, \erew$ are computed from the $(f, \hat f)$ and $(r, \hat r)$ pair using the same definitions that the main-text formulas use.

The synthetic benchmark contains $n = 150$ configurations in which $f$ and $r$ are globally Lipschitz; on this benchmark $L_f, L_r, L_\pi$ in $\mathrm{RHS}$ of \cref{eq:decomposed_bound} are computed analytically as the global Lipschitz constants of the configuration's maps. The LQG benchmark contains $n = 375$ configurations in which the quadratic reward is Lipschitz only on a bounded operating domain (a fixed compact subset of state-action space within which the configuration's rollouts are confined); on this benchmark $L_r$ is the Lipschitz constant of $r$ restricted to that domain.

The empirical CDFs of $R$ in \Cref{fig:lemma_calibration} have qualitatively different shapes on the two benchmarks. The LQG ECDF climbs steadily across $R \in [10^{-3}, 1]$. The synthetic ECDF rises sharply through its low-$R$ mass, stays flat at value $\approx 0.66$ across $R \in [10^{-2},\, 5\times 10^{-1}]$, and then climbs toward $R = 1$ on the remaining $\approx 25\%$ of configurations. The pooled median $R = 0.015$ is therefore dominated by the LQG arm. The pool maximum is $R = 0.9995$ on a synthetic configuration, and the LQG arm peaks at $R = 0.999$.

Two annotations on \Cref{fig:lemma_calibration} should be read as follows: $\texttt{max~R = 0.999~(LQG)}$ is the LQG-arm maximum (the pool maximum is the slightly larger synthetic value $R = 0.9995$), and the legend label \texttt{toy} denotes the synthetic benchmark.

We report only the per-benchmark medians, the pooled median, and the per-benchmark maxima; no bootstrap confidence intervals are used because the relevant claim is the universal $R \le 1$, which is verified directly on every configuration.


\subsection{Experiment details: empirical evaluation of the optimal sample-ratio formula}
\label{app:allocation_evaluation_details}

This appendix collects the per-configuration construction, definitions, statistical methodology, and numerical results for the allocation-evaluation experiment in \Cref{sec:allocation_evaluation}, whose main-text findings are summarized in \Cref{fig:allocation_form,fig:lipschitz_slack}.

The configurations used in \Cref{fig:allocation_form,fig:lipschitz_slack} are partitioned into four groups, each generated by a Cartesian product over a small set of axes; per-configuration values are recorded in the committed CSVs and we list the grids verbatim below. The interpretation of the axes \texttt{sigma\_ratio}, \texttt{cost\_ratio}, $\lambda$, and \texttt{theta\_f0} is deferred to the open item at the end of this appendix.

\paragraph{Linear-value configurations ($n = 30$, panel~(a) of \Cref{fig:allocation_form}).}
The Cartesian product is $(L_f, \lambda) \in \{(0.5, 0.5),\; (2.0, 2.0)\}$ (with $L_f = \lambda$ enforced) crossed with $\texttt{sigma\_ratio} \in \{0.1, 0.3, 1.0, 3.0, 10.0\}$ and $\texttt{cost\_ratio} \in \{0.1, 1.0, 10.0\}$, for $2 \times 5 \times 3 = 30$ configurations. The reward Lipschitz constant is fixed at $L_r = 1$.

\paragraph{$\tanh$-value and $\sin$-value configurations ($n = 9$ each, panel~(a) of \Cref{fig:allocation_form}).}
For each of these two value-function families, the Cartesian product is $\texttt{sigma\_ratio} \in \{0.3, 1.0, 3.0\}$ crossed with $\texttt{cost\_ratio} \in \{0.1, 1.0, 10.0\}$, with $L_f = L_r = \lambda = 1$ fixed.

\paragraph{Quadratic-value $\sup$-norm-control configurations ($n = 9$, panel~(b) of \Cref{fig:allocation_form}).}
The Cartesian product is the same $\texttt{sigma\_ratio} \times \texttt{cost\_ratio}$ grid of size~$9$, with $\lambda = 1$ and $\texttt{theta\_f0} = 0.5$ fixed. Each configuration is evaluated twice: once with $L_f^{\mathrm{local}} = 1$ (the realized-sensitivity calculation, plotted as filled circles in panel~(b)) and once with $L_f^{\sup} = 2$ (the $\sup$-norm control, plotted as open squares).

\paragraph{LQG configurations ($n = 30$, \Cref{fig:lipschitz_slack}).}
Configurations are indexed by $\texttt{seed} \in \{0, 1, \ldots, 29\}$ and parameterized so that, at $\gamma = 0.8$, the contraction quantity $\gamma L_f (1 + L_\pi)$ lies in $[0.224, 0.228]$ (margin $1 - \gamma L_f (1 + L_\pi) \in [0.772, 0.776]$). The realized constants per seed land in $L_f \in [0.280, 0.285]$, $L_\pi \in [1.6\times 10^{-4}, 1.0\times 10^{-3}]$, $L_r \in [0.82, 1.49]$.

For each seed we fit per-configuration power-law scaling laws $\edyn(N) = A_d \cdot N^{-\alpha_d}$ and $\erew(N) = A_r \cdot N^{-\alpha_r}$, with $A_d \in [1.0\times 10^{-3}, 1.7\times 10^{-3}]$, $A_r \in [2.2\times 10^{-3}, 4.0\times 10^{-3}]$, $\alpha_d \in [0.97, 1.05]$, $\alpha_r \in [0.99, 1.08]$, $R^2_d \in [0.995, 0.9997]$, and $R^2_r \in [0.996, 0.9998]$ (recorded per-configuration in the \texttt{A\_d, A\_r, alpha\_d, alpha\_r, R2\_d, R2\_r} columns of \texttt{lqg\_per\_instance.csv}). These per-seed exponents are distinct from the global $\alpha, \beta$ fit in \Cref{sec:error_scaling}.

The \texttt{boundary\_excluded} column reports that $0/30$ configurations were filtered out as boundary cases (the filter would exclude any configuration whose realized contraction $\gamma L_f(1+L_\pi)$ approached $1$, which would invalidate the hypothesis of \Cref{lem:decomposed}; none did).

We define the quantities used in this experiment. Let $V^\pi(s; f, r)$ denote the discounted return of a fixed evaluation policy $\pi$ when the dynamics map is $f$ and the reward map is $r$. For each test configuration, let $\Delta f := \hat f - f$ and $\Delta r := \hat r - r$ denote the realized model perturbations, and fix a step size $h>0$. Define the realized value sensitivities by the one-sided finite-difference quotients
\begin{align*}
  S_f(s,a) &\;:=\; \frac{|V^\pi(s; f + h\,\Delta f, r) - V^\pi(s; f, r)|}{h\,\|\Delta f\|}, \\
  S_r(s,a) &\;:=\; \frac{|V^\pi(s; f, r + h\,\Delta r) - V^\pi(s; f, r)|}{h\,|\Delta r|},
\end{align*}
where $\|\cdot\|$ is the Euclidean norm; as $h \to 0$, $S_f$ and $S_r$ approach the directional derivatives $\partial_f V^\pi$ and $\partial_r V^\pi$, which provide the underlying intuition for these realized sensitivities. The global Lipschitz constant of the dynamics map in the spectral norm is any
\begin{equation*}
  L_f \;\geq\; \sup_{(s,a)\neq(s',a')}\; \frac{\|f(s,a) - f(s',a')\|}{\|s-s'\| + \|a-a'\|},
\end{equation*}
and $L_r$ is defined analogously, on a bounded operating domain when the reward is only locally Lipschitz. Let $K_{\mathrm{Lip}} := \splitcoef$ denote the dynamics coefficient appearing in \cref{eq:decomposed_bound}, instantiated with the analytical global Lipschitz constants $L_f, L_r$, and let $K'$ denote the same coefficient with $L_f$ in the contraction factor $1 - \gamma L_f(1+L_\pi)$ replaced by the realized value-function sensitivity $S_f$. The log-ratio residual $\ell$ is defined in \Cref{sec:allocation_evaluation}.

\begin{figure}[t]
  \centering
  \includegraphics[width=0.65\linewidth]{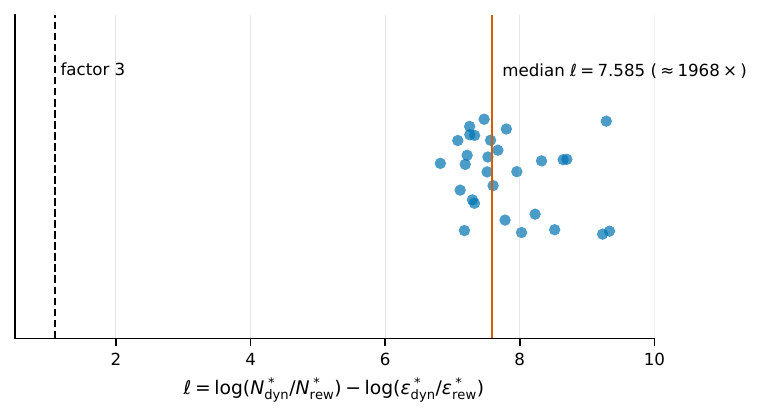}
  \caption{The multiplier in \cref{eq:optimal_ratio} instantiated with the global Lipschitz constants of \Cref{lem:decomposed} overshoots realized ratios by about three orders of magnitude on LQG. Each of $n = 30$ LQG configurations at $\gamma = 0.8$ lies above the dashed horizontal line $\ell = \log 3$ (predicted and realized ratios within a factor of $3$); the solid horizontal line marks the median $\ell = 7.585$.}
  \label{fig:lipschitz_slack}
\end{figure}

We summarize each group of configurations by the across-configuration median of $|\ell|$ (or $\ell$ when overshoot direction is informative); \Cref{fig:lipschitz_slack} jitters the vertical position of each LQG marker for visibility. To check that overshoot in the global-Lipschitz multiplier comparison is not consistent with chance, we run a one-sided sign test against the null that overprediction and underprediction are equally likely; with $30/30$ configurations overshooting, the test gives $p = 0.5^{30} \approx 9.31 \times 10^{-10}$.

The headline per-group medians are reported in \Cref{sec:allocation_evaluation}; here we record the additional numbers underlying the $\sup$-norm-control comparison. On the $9$ quadratic-value configurations, the median residual rises from $|\ell_{\mathrm{realized}}| = 0.074$ to $|\ell_{\sup}| = 0.684$ when realized sensitivities are replaced by their $\sup$-norm overestimates, a factor of $\approx 9.25\times$. The maximum $|\ell|$ on the linear-value configurations is also $0$, complementing the median value of $0$ already reported in main text.

A separate statistic isolates the dynamics-coefficient slack itself. On the same $n = 30$ LQG configurations (committed per-configuration as the \texttt{K\_prime} column of \texttt{lqg\_per\_instance.csv}), the multiplier ratio $K_{\mathrm{Lip}} / K'$ has median $2.37$ and maximum $3.23$ (minimum $1.71$). This is a different quantity from the residual $\ell$: the factor $\approx 1968$ figure is $\exp(7.585)$ on the residual itself, while $K_{\mathrm{Lip}} / K'$ isolates the contribution of replacing the global $L_f$ with the realized $S_f$ in the dynamics-side coefficient alone.

Worst-case bounds in approximate dynamic programming, such as the simulation-lemma bounds on policy-evaluation and policy-improvement error stated in terms of model error, are classically conservative when instantiated with global Lipschitz or sup-norm constants \citep{munos2003,kakade2002}. The pattern observed here matches that classical observation: the bound in \Cref{lem:decomposed} certifies a sufficient condition, while the proportionality in \cref{eq:optimal_ratio} carries the predictive content for sample allocation.


\subsection{Proof of \Cref{thm:noise}}
\label{app:proof_noise}

\noisethm*

\begin{proof}
A trajectory $\tau = ((s_0, a_0), \ldots, (s_{H-1}, a_{H-1}))$ is sampled by executing $\pi_\theta$ in $\mathcal{M}$. Let $G_t$, $\hat{G}_t$, and $\hat{g}^{(1)}$ be as defined in \Cref{sec:noisy_rewards}.
Define the corresponding \reinforce estimator when the rewards are noise-free as
$g^{(1)} := \sum_{t=0}^{H-1} \nabla_\theta \log \pi_\theta(a_t \mid s_t)\, G_t$,
so $\mathbb{E}_{\tau}[g^{(1)}] = g_H$.

Let
$N_t := \sum_{t'=t}^{H-1} \gamma^{t'-t} \eta_{t'}$,
so that $\hat{G}_t = G_t + N_t$.
Let
$\delta^{(1)} := \sum_{t=0}^{H-1} \nabla_\theta \log \pi_\theta(a_t \mid s_t)\, N_t$,
so that $\hat{g}^{(1)} = g^{(1)} + \delta^{(1)}$.
Fix the sampled trajectory $\tau$ and take conditional expectation over the
reward noise. Because each $\eta_{t'}$ is independent of
the state-action history and has mean zero,
\begin{align*}
\mathbb{E}[N_t \mid \tau]
&= \sum_{t'=t}^{H-1} \gamma^{t'-t}\, \mathbb{E}[\eta_{t'} \mid \tau] \\
&= \sum_{t'=t}^{H-1} \gamma^{t'-t}\, \mathbb{E}[\eta_{t'}] \\
&= 0.
\end{align*}
Therefore
\begin{equation}
\label{eq:proof_noise_delta_conditional_mean_zero}
\begin{aligned}
\mathbb{E}[\delta^{(1)} \mid \tau]
&= \sum_{t=0}^{H-1} \nabla_\theta \log \pi_\theta(a_t \mid s_t)\,
   \mathbb{E}[N_t \mid \tau] \\
&= 0,
\end{aligned}
\end{equation}
and hence
\begin{equation*}
\mathbb{E}[\hat{g}^{(1)} \mid \tau] = g^{(1)}.
\end{equation*}
Therefore $\mathbb{E}[\hat{g}^{(1)}] = \mathbb{E}[g^{(1)}] = g_H$, and averaging
over $K$ trajectories gives $\mathbb{E}[\hat{g}] = g_H$.

Fix the sampled trajectory $\tau$ and define
\begin{equation*}
w_t := \nabla_\theta \log \pi_\theta(a_t \mid s_t).
\end{equation*}
Then
\begin{align*}
\delta^{(1)}
&= \sum_{t=0}^{H-1} w_t \sum_{t'=t}^{H-1} \gamma^{t'-t}\eta_{t'} \\
&= \sum_{t=0}^{H-1} \sum_{t'=t}^{H-1} \gamma^{t'-t} w_t \eta_{t'} \\
&= \sum_{t'=0}^{H-1} \sum_{\substack{t=0 \\ t \le t'}}^{H-1} \gamma^{t'-t} w_t \eta_{t'} \\
&= \sum_{t'=0}^{H-1} \sum_{t=0}^{t'} \gamma^{t'-t} w_t \eta_{t'}.
\end{align*}
Define
\begin{equation*}
v_{t'} := \sum_{t=0}^{t'} \gamma^{t'-t} w_t.
\end{equation*}
Then
\begin{equation*}
\delta^{(1)} = \sum_{t'=0}^{H-1} v_{t'} \eta_{t'}.
\end{equation*}
Using \Cref{eq:proof_noise_delta_conditional_mean_zero},
\begin{align*}
\mathrm{Var}[\delta^{(1)} \mid \tau]
&= \sum_{i=1}^d \mathrm{Var}[\delta^{(1)}_i \mid \tau] \\
&= \sum_{i=1}^d
   \mathbb{E}\!\left[
   \left(\delta^{(1)}_i-\mathbb{E}[\delta^{(1)}_i \mid \tau]\right)^2
   \middle| \tau\right] \\
&= \mathbb{E}\bigl[\|\delta^{(1)} - \mathbb{E}[\delta^{(1)} \mid \tau]\|^2
   \mid \tau\bigr] \\
&= \mathbb{E}\bigl[\|\delta^{(1)}\|^2 \mid \tau\bigr] \\
&= \mathbb{E}\!\left[\left\|\sum_{t'=0}^{H-1} v_{t'} \eta_{t'}\right\|^2
   \middle| \tau\right] \\
&= \mathbb{E}\!\left[\left\langle \sum_{t'=0}^{H-1} v_{t'} \eta_{t'},
   \sum_{u=0}^{H-1} v_u \eta_u \right\rangle \middle| \tau\right] \\
&= \mathbb{E}\!\left[\sum_{t'=0}^{H-1} \sum_{u=0}^{H-1}
   \left\langle v_{t'} \eta_{t'}, v_u \eta_u \right\rangle
   \middle| \tau\right] \\
&= \mathbb{E}\!\left[\sum_{t'=0}^{H-1} \sum_{u=0}^{H-1}
   \eta_{t'}\eta_u \langle v_{t'}, v_u\rangle \middle| \tau\right] \\
&= \sum_{t'=0}^{H-1} \sum_{u=0}^{H-1}
   \mathbb{E}[\eta_{t'}\eta_u \mid \tau] \, \langle v_{t'}, v_u\rangle \\
&= \sum_{t'=0}^{H-1}
   \mathbb{E}[\eta_{t'}^2 \mid \tau] \, \langle v_{t'}, v_{t'}\rangle + \sum_{t'=0}^{H-1} \sum_{\substack{u=0 \\ u \neq t'}}^{H-1}
   \mathbb{E}[\eta_{t'}\eta_u \mid \tau] \, \langle v_{t'}, v_u\rangle \\
\intertext{For $t' \neq u$, independence of the reward noise from $\tau$, together with
independence across time and zero mean, gives
$\mathbb{E}[\eta_{t'}\eta_u \mid \tau]
= \mathbb{E}[\eta_{t'}\eta_u]
= \mathbb{E}[\eta_{t'}]\mathbb{E}[\eta_u]
= 0$, while
$\mathbb{E}[\eta_{t'}^2 \mid \tau]
= \mathbb{E}[\eta_{t'}^2]
= \sigma_\eta^2$. Therefore}
&= \sum_{t'=0}^{H-1} \sigma_\eta^2 \|v_{t'}\|^2 \\
&= \sigma_\eta^2 \sum_{t'=0}^{H-1} \|v_{t'}\|^2.
\end{align*}
Next,
\begin{align*}
\|v_{t'}\|
&= \left\|\sum_{t=0}^{t'} \gamma^{t'-t} w_t\right\| \\
&\leq \sum_{t=0}^{t'} \gamma^{t'-t}\|w_t\| \\
&\leq \left(\sum_{t=0}^{t'} \gamma^{t'-t}\right)
   \max_{0 \leq t \leq H-1}\|w_t\| \\
&\leq \frac{1}{1-\gamma}\max_{0 \leq t \leq H-1}\|w_t\|.
\end{align*}
Substituting the bound on $\|v_{t'}\|$ into the conditional variance gives
\begin{align*}
\mathrm{Var}[\delta^{(1)} \mid \tau]
&\leq \sigma_\eta^2 \sum_{t'=0}^{H-1}
   \left(\frac{1}{1-\gamma}\max_{0 \leq t \leq H-1}\|w_t\|\right)^2 \\
&= \frac{\sigma_\eta^2 H}{(1-\gamma)^2}
   \max_{0 \leq t \leq H-1}\|w_t\|^2.
\end{align*}
Taking expectations over $\tau$ and using the definitions of $w_t$ and $W_H^2$
yields
\begin{align*}
\mathbb{E}[\mathrm{Var}[\delta^{(1)}|\tau]]
&\leq \frac{\sigma_\eta^2 H}{(1-\gamma)^2}
   \mathbb{E}\!\left[\max_{0 \leq t \leq H-1}\|w_t\|^2\right] \\
&= \frac{\sigma_\eta^2 H}{(1-\gamma)^2}
   \mathbb{E}\!\left[\max_{0 \leq t \leq H-1}
   \|\nabla_\theta \log \pi_\theta(a_t \mid s_t)\|^2\right] \\
&= \frac{\sigma_\eta^2 H W_H^2}{(1-\gamma)^2}.
\end{align*}
Now apply the law of total variance to each coordinate and sum over
coordinates. Here the conditional variance is over the reward noise given a
fixed trajectory $\tau$:
\begin{align}
\mathrm{Var}[\hat{g}^{(1)}]
&= \mathrm{Var}\!\bigl(\mathbb{E}[\hat{g}^{(1)} \mid \tau]\bigr)
   + \mathbb{E}\bigl[\mathrm{Var}[\hat{g}^{(1)} \mid \tau]\bigr] \notag \\
&= \mathrm{Var}[g^{(1)}]
   + \mathbb{E}\bigl[\mathrm{Var}[g^{(1)} + \delta^{(1)} \mid \tau]\bigr] \notag \\
\intertext{Because $g^{(1)}$ depends only on $\tau$, conditioning on $\tau$
makes $g^{(1)}$ deterministic. Hence
$\mathrm{Var}[g^{(1)} + \delta^{(1)} \mid \tau]
= \mathrm{Var}[\delta^{(1)} \mid \tau]$ almost surely, so}
&= \mathrm{Var}[g^{(1)}] + \mathbb{E}[\mathrm{Var}[\delta^{(1)} \mid \tau]] \notag \\
&\leq \mathrm{Var}[g^{(1)}]
   + \frac{\sigma_\eta^2 H W_H^2}{(1-\gamma)^2}.
\label{eq:proof_noise_single_trajectory_variance_bound}
\end{align}

Since the trajectories and their reward noises are sampled independently across
$k$, the estimators
$\hat{g}^{(1)},\ldots,\hat{g}^{(K)}$ are independent. Each estimator has mean
$g_H$. Write $\hat{g}_i$ and $(g_H)_i$ for the $i$th coordinates of $\hat{g}$
and $g_H$. Then
\begin{align}
\mathrm{Var}[\hat{g}]
&= \sum_{i=1}^d \mathrm{Var}[\hat{g}_i] \notag \\
&= \sum_{i=1}^d
   \mathbb{E}\!\left[(\hat{g}_i-\mathbb{E}[\hat{g}_i])^2\right] \notag \\
&= \sum_{i=1}^d
   \mathbb{E}\!\left[(\hat{g}_i-(g_H)_i)^2\right] \notag \\
&= \mathbb{E}\!\left[\sum_{i=1}^d(\hat{g}_i-(g_H)_i)^2\right] \notag \\
&= \mathbb{E}\!\left[\|\hat{g}-g_H\|^2\right] \notag \\
&= \mathbb{E}\!\left[
   \left\|\frac{1}{K}\sum_{k=1}^K \hat{g}^{(k)}-g_H\right\|^2\right] \notag \\
&= \frac{1}{K^2}\mathbb{E}\!\left[
   \left\|\sum_{k=1}^K(\hat{g}^{(k)}-g_H)\right\|^2\right] \notag \\
&= \frac{1}{K^2}\mathbb{E}\!\left[
   \left\langle \sum_{k=1}^K(\hat{g}^{(k)}-g_H),
   \sum_{\ell=1}^K(\hat{g}^{(\ell)}-g_H) \right\rangle
   \right] \notag \\
&= \frac{1}{K^2}\sum_{k=1}^K \sum_{\ell=1}^K
   \mathbb{E}\!\left[
   \left\langle \hat{g}^{(k)}-g_H,\hat{g}^{(\ell)}-g_H \right\rangle
   \right] \notag \\
\intertext{For $k \neq \ell$, independence gives
$\mathbb{E}[\langle \hat{g}^{(k)}-g_H,\hat{g}^{(\ell)}-g_H\rangle]
= \langle \mathbb{E}[\hat{g}^{(k)}-g_H],
   \mathbb{E}[\hat{g}^{(\ell)}-g_H]\rangle
= 0$. Therefore only the terms with $k = \ell$ remain, so}
&= \frac{1}{K^2}\sum_{k=1}^K
   \mathbb{E}\!\left[
   \left\langle \hat{g}^{(k)}-g_H,\hat{g}^{(k)}-g_H \right\rangle
   \right] \notag \\
&= \frac{1}{K^2}\sum_{k=1}^K
   \mathbb{E}\!\left[
   \sum_{i=1}^d(\hat{g}^{(k)}_i-(g_H)_i)^2\right] \notag \\
&= \frac{1}{K^2}\sum_{k=1}^K \sum_{i=1}^d
   \mathbb{E}\!\left[(\hat{g}^{(k)}_i-(g_H)_i)^2\right] \notag \\
\intertext{Since each coordinate of $\hat{g}^{(k)}$ has mean the corresponding
coordinate of $g_H$,}
&= \frac{1}{K^2}\sum_{k=1}^K \sum_{i=1}^d
   \mathrm{Var}[\hat{g}^{(k)}_i] \notag \\
\intertext{By the definition of $\mathrm{Var}$ for vector-valued estimators,}
&= \frac{1}{K^2}\sum_{k=1}^K \mathrm{Var}[\hat{g}^{(k)}] \notag \\
\intertext{The estimators $\hat{g}^{(1)},\ldots,\hat{g}^{(K)}$ have the same
distribution, so each term in the sum equals $\mathrm{Var}[\hat{g}^{(1)}]$:}
&= \frac{1}{K^2}\sum_{k=1}^K \mathrm{Var}[\hat{g}^{(1)}] \notag \\
&= \frac{1}{K}\mathrm{Var}[\hat{g}^{(1)}].
\label{eq:proof_noise_noisy_average_variance}
\end{align}
When the rewards are noise-free, the averaged estimator $\hat{g}$ equals
$\hat{g} = \frac{1}{K}\sum_{k=1}^K g^{(k)}$. Repeating the expansion above gives
\begin{align}
\mathrm{Var}[\hat{g}]_{\eta \equiv 0}
&= \mathrm{Var}\!\left[\frac{1}{K}\sum_{k=1}^K g^{(k)}\right] \notag \\
&= \frac{1}{K^2}\sum_{k=1}^K \sum_{\ell=1}^K
   \mathbb{E}\!\left[
   \left\langle g^{(k)}-g_H,g^{(\ell)}-g_H \right\rangle
   \right] \notag \\
\intertext{For $k \neq \ell$, independence gives
$\mathbb{E}[\langle g^{(k)}-g_H,g^{(\ell)}-g_H\rangle]
= \langle \mathbb{E}[g^{(k)}-g_H],\mathbb{E}[g^{(\ell)}-g_H]\rangle
= 0$, so}
&= \frac{1}{K^2}\sum_{k=1}^K
   \mathbb{E}\!\left[\|g^{(k)}-g_H\|^2\right] \notag \\
&= \frac{1}{K^2}\sum_{k=1}^K \mathrm{Var}[g^{(k)}] \notag \\
&= \frac{1}{K^2}\sum_{k=1}^K \mathrm{Var}[g^{(1)}] \notag \\
&= \frac{1}{K}\mathrm{Var}[g^{(1)}].
\label{eq:proof_noise_noise_free_average_variance}
\end{align}
Combining
\Cref{eq:proof_noise_noisy_average_variance} with
\Cref{eq:proof_noise_single_trajectory_variance_bound} gives
\begin{align*}
\mathrm{Var}[\hat{g}]
&= \frac{1}{K}\mathrm{Var}[\hat{g}^{(1)}] \\
&\leq \frac{1}{K}\left(\mathrm{Var}[g^{(1)}]
   + \frac{\sigma_\eta^2 H W_H^2}{(1-\gamma)^2}\right) \\
&= \frac{1}{K}\mathrm{Var}[g^{(1)}]
   + \frac{\sigma_\eta^2 H W_H^2}{K(1-\gamma)^2} \\
&= \mathrm{Var}[\hat{g}]_{\eta \equiv 0}
   + \frac{\sigma_\eta^2 H W_H^2}{K(1-\gamma)^2},
\end{align*}
where the last equality uses
\Cref{eq:proof_noise_noise_free_average_variance}.
\end{proof}

\subsection{Visualization of the three fidelity-cost regimes from \Cref{cor:fidelity}}
\label{app:fidelity_visualizations}

\Cref{fig:fidelity_examples} plots $\sigma_\eta^2(c)$ and $\Phi(c) = c\,\sigma_\eta^2(c)$ for the three regimes analyzed in \Cref{sec:noisy_rewards}.

\begin{figure}[htb]
  \centering
  \begin{subfigure}[t]{0.28\linewidth}
    \centering
    \includegraphics[width=\linewidth]{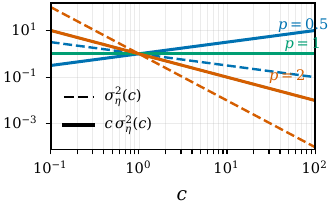}
    \caption{Power-law fidelity.}
    \label{fig:fidelity_powerlaw}
  \end{subfigure}\hfill
  \begin{subfigure}[t]{0.28\linewidth}
    \centering
    \includegraphics[width=\linewidth]{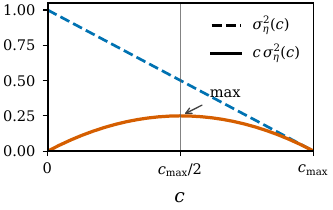}
    \caption{Bounded fidelity.}
    \label{fig:fidelity_bounded}
  \end{subfigure}\hfill
  \begin{subfigure}[t]{0.28\linewidth}
    \centering
    \includegraphics[width=\linewidth]{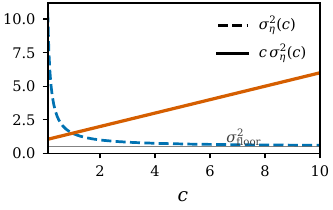}
    \caption{Irreducible noise floor.}
    \label{fig:fidelity_floor}
  \end{subfigure}
  \caption{Three examples of how the noise variance $\sigma_\eta^2(c)$ depends on the cost $c$, and the corresponding $\Phi(c) = c\,\sigma_\eta^2(c)$ from \Cref{cor:fidelity}.}
  \label{fig:fidelity_examples}
\end{figure}

\subsection[Proof of the biased reward proposition]{Proof of \Cref{prop:biased_reward}}
\label{app:proof_biased_reward}

\biasedrewardthm*

\begin{proof}
For one trajectory $\tau = ((s_0, a_0), \ldots, (s_{H-1}, a_{H-1}))$ sampled
by executing $\pi_\theta$ under the true dynamics $f$, define the biased
finite-horizon return from time $t$ onward by
\begin{equation*}
\tilde{G}_t := \sum_{t'=t}^{H-1}\gamma^{t'-t}\tilde{r}(s_{t'},a_{t'}).
\end{equation*}
The single-trajectory estimator using biased rewards is
\begin{equation*}
\tilde{g}^{(1)}
:= \sum_{t=0}^{H-1}
   \nabla_\theta \log \pi_\theta(a_t \mid s_t)\,\tilde{G}_t.
\end{equation*}

The estimator $\tilde{g}^{(1)}$ is the finite-horizon \reinforce estimator for
the \mdp $\tilde{\mathcal{M}}$, which has the true dynamics $f$ and the biased
reward $\tilde{r}$. Therefore the finite-horizon policy-gradient identity gives
\begin{equation*}
\mathbb{E}[\tilde{g}^{(1)}]
= \nabla_\theta J_H(\pi_\theta,\tilde{\mathcal{M}}).
\end{equation*}
Averaging over $K$ independent trajectories does not change the mean:
\begin{align*}
\mathbb{E}[\tilde{g}]
&= \mathbb{E}\!\left[\frac{1}{K}\sum_{k=1}^K \tilde{g}^{(k)}\right] \\
&= \frac{1}{K}\sum_{k=1}^K \mathbb{E}[\tilde{g}^{(k)}] \\
&= \nabla_\theta J_H(\pi_\theta,\tilde{\mathcal{M}}).
\end{align*}
By the definitions of $\tilde{r}$, $J_H$, and $B_H$,
\begin{align*}
J_H(\pi_\theta,\tilde{\mathcal{M}})
&= \mathbb{E}\!\left[
   \sum_{t=0}^{H-1}\gamma^t \tilde{r}(s_t,a_t)\right] \\
&= \mathbb{E}\!\left[
   \sum_{t=0}^{H-1}\gamma^t (r(s_t,a_t) + b(s_t,a_t))\right] \\
&= \mathbb{E}\!\left[
   \sum_{t=0}^{H-1}\gamma^t r(s_t,a_t)\right]
   + \mathbb{E}\!\left[
   \sum_{t=0}^{H-1}\gamma^t b(s_t,a_t)\right] \\
&= J_H(\pi_\theta,\mathcal{M}) + B_H(\theta).
\end{align*}
Taking gradients with respect to $\theta$ gives
\begin{align*}
\nabla_\theta J_H(\pi_\theta,\tilde{\mathcal{M}})
&= \nabla_\theta J_H(\pi_\theta,\mathcal{M})
   + \nabla_\theta B_H(\theta) \\
&= g_H + \nabla_\theta B_H(\theta).
\end{align*}
Combining the preceding displays proves \Cref{eq:biased_reward_mean}.

Let
\begin{equation*}
\mu_b := \mathbb{E}[\tilde{g}] - g_H.
\end{equation*}
By \Cref{eq:biased_reward_mean}, $\mu_b = \nabla_\theta B_H(\theta)$.
We use the identity $\|x+y\|^2=\|x\|^2+2\langle x,y\rangle+\|y\|^2$
with $x=\tilde{g}-\mathbb{E}[\tilde{g}]$ and $y=\mu_b$.
We also use linearity of expectation and the fact that $\mu_b$ is a fixed
vector once $\theta$ is fixed, so
$\mathbb{E}[\|\mu_b\|^2]=\|\mu_b\|^2$.
Then
\begin{align*}
\mathbb{E}\!\left[\|\tilde{g}-g_H\|^2\right]
&= \mathbb{E}\!\left[
   \|\tilde{g}-\mathbb{E}[\tilde{g}] + \mathbb{E}[\tilde{g}]-g_H\|^2
   \right] \\
&= \mathbb{E}\!\left[
   \|\tilde{g}-\mathbb{E}[\tilde{g}] + \mu_b\|^2
   \right] \\
&= \mathbb{E}\!\left[
   \|\tilde{g}-\mathbb{E}[\tilde{g}]\|^2
   + 2\left\langle \tilde{g}-\mathbb{E}[\tilde{g}],\mu_b\right\rangle
   + \|\mu_b\|^2
   \right] \\
&= \mathbb{E}\!\left[\|\tilde{g}-\mathbb{E}[\tilde{g}]\|^2\right]
   + 2\,\mathbb{E}\!\left[
   \left\langle \tilde{g}-\mathbb{E}[\tilde{g}],\mu_b\right\rangle
   \right]
   + \|\mu_b\|^2 \\
&= \mathrm{Var}[\tilde{g}] + \|\mu_b\|^2
\end{align*}
because $\mathbb{E}[\langle \tilde{g}-\mathbb{E}[\tilde{g}],\mu_b\rangle]
= \langle \mathbb{E}[\tilde{g}]-\mathbb{E}[\tilde{g}],\mu_b\rangle = 0$.
Since the estimators
$\tilde{g}^{(1)},\ldots,\tilde{g}^{(K)}$ are independent and identically
distributed,
\begin{align*}
\mathrm{Var}[\tilde{g}]
&= \mathrm{Var}\!\left[\frac{1}{K}\sum_{k=1}^K \tilde{g}^{(k)}\right] \\
&= \frac{1}{K^2}\sum_{k=1}^K \mathrm{Var}[\tilde{g}^{(k)}] \\
&= \frac{1}{K}\mathrm{Var}[\tilde{g}^{(1)}].
\end{align*}
Substituting $\mu_b = \nabla_\theta B_H(\theta)$ gives
\Cref{eq:biased_reward_mse}. If $\nabla_\theta B_H(\theta) \neq 0$, the second
term in \Cref{eq:biased_reward_mse} is positive and does not depend on $K$.
\end{proof}

\section{Experimental details}

\subsection{Synthetic teacher environment}
\label{app:teacher_env}

The environment is a synthetic continuous-control environment whose dynamics and reward function are defined by frozen, randomly-initialized neural networks. Decoupling the environment from any particular physics simulator gives exact control over dynamics complexity, reward fidelity, and partial observability, which enables the controlled scaling experiment of \Cref{sec:error_scaling}.

The teacher consists of two 2-layer ReLU MLPs.
The dynamics teacher $f_{\mathrm{dyn}}: \mathbb{R}^{d_s + d_a} \to \mathbb{R}^{d_s}$ maps $(s_t, a_t)$ to the next state via $s_{t+1} = \tanh(f_{\mathrm{dyn}}([s_t, a_t]))$.
The reward teacher $f_{\mathrm{rew}}: \mathbb{R}^{d_s + d_a} \to \mathbb{R}$ produces the scalar reward $r_t = f_{\mathrm{rew}}([s_t, a_t]) / \sqrt{d_h}$, where $d_h$ is the teacher hidden width.
All teacher weights are drawn from $\mathcal{N}(0, 1)$ using a fixed seed and never updated, so the same seed yields the identical \mdp.
The $\tanh$ activation in dynamics constrains the state to $[-1, 1]^{d_s}$, and the $1/\sqrt{d_h}$ reward scaling ensures comparable reward magnitude across teacher widths.
Default dimensions: $d_s = 12$, $d_a = 4$, $d_h = 64$, episode length $T = 500$.

\subsection{Experiment details: power-law scaling of dynamics and reward error}
\label{app:error_scaling_details}

This appendix collects the protocol and engineering details for the error--sample-size scaling experiment in \Cref{sec:error_scaling}, whose main-text result is summarized in \Cref{fig:error_scaling}.

We use the teacher of \Cref{app:teacher_env}, which keeps the experiment inside the regime where the assumptions of \Cref{thm:optimal_split} hold.
Training transitions are drawn i.i.d.\ from the teacher's induced state distribution by rolling out a smooth reference policy under the frozen teacher and subsampling $(s, a, s', r)$ tuples; the released code records the exact rollout and subsampling procedure.

Power-law exponents are fit on the seven anchors $N \in \{2{,}000,\; 5{,}000,\; 10{,}000,\; 20{,}000,\; 50{,}000,\; 100{,}000,\; 200{,}000\}$, each with 100 independent seeds.
A single $N = 500{,}000$ run is preserved in the released dataset for future analyses but is excluded from both the figure and the fit, because only one seed was completed at that anchor.

For each $(N, \text{seed})$ we train two independent students that mirror the teacher architecture: a dynamics head $f_{\mathrm{dyn}}^{\theta}: \mathbb{R}^{d_s + d_a} \to \mathbb{R}^{d_s}$ and a reward head $f_{\mathrm{rew}}^{\phi}: \mathbb{R}^{d_s + d_a} \to \mathbb{R}$, each implemented as a 2-layer ReLU MLP with hidden width $d_h = 64$ matching the teacher (no output activation, no shared trunk, no weight tying, no input/target normalization, default PyTorch \texttt{nn.Linear} initialization).
Both heads are optimized independently with Adam at learning rate $10^{-3}$ (PyTorch defaults for $\beta_1, \beta_2, \epsilon$; no weight decay) and a mini-batch size of $256$ drawn from the same $N$-sample training pool with reshuffling each epoch.
Training proceeds for a fixed schedule of $200$ epochs with no learning-rate schedule and no early stopping; the held-out validation MSE is recorded every epoch but is not used to gate the optimizer, and the reported $\edyn(\Ndyn)$ and $\erew(\Nrew)$ are the final-epoch validation MSEs of the two heads, computed under \texttt{torch.no\_grad}.
The training objective is the per-head mean-squared error, $\mathcal{L}_{\mathrm{dyn}}(\theta) = \tfrac{1}{|\mathcal{B}|} \sum_{(s, a, s') \in \mathcal{B}} \lVert f_{\mathrm{dyn}}^{\theta}([s, a]) - s' \rVert_2^2$ and $\mathcal{L}_{\mathrm{rew}}(\phi) = \tfrac{1}{|\mathcal{B}|} \sum_{(s, a, r) \in \mathcal{B}} (f_{\mathrm{rew}}^{\phi}([s, a]) - r)^2$, evaluated on each mini-batch $\mathcal{B}$.
At $200$ training epochs, and for every anchor $N \ge 2{,}000$, the training MSE is fully saturated, so the final-epoch values used by the fit coincide with the converged held-out loss to within seed noise.

Each anchor evaluates against a fixed held-out set of $5{,}000$ transitions, drawn independently of the training pool.
Holding the validation set size constant across $N$ keeps the evaluation noise floor independent of the training pool size, so any $N$-dependence of the measured MSE reflects student generalization rather than a changing estimator variance.

Per anchor, we report the across-seed mean MSE in the figure with $\pm 1$ standard error of the mean (SEM) error bars; this across-seed SEM is distinct from the bootstrap standard errors reported below for the fitted parameters.
The power laws $\edyn(\Ndyn) = A_d \cdot \Ndyn^{-\alpha}$ and $\erew(\Nrew) = A_r \cdot \Nrew^{-\beta}$ are fit by ordinary log--log linear regression on the per-anchor means, where $\alpha$ is the dynamics exponent and $\beta$ is the reward exponent reported in \Cref{sec:error_scaling}.
Uncertainty on $(A_d, A_r, \alpha, \beta)$ is obtained by a stratified bootstrap with $1{,}000$ resamples that draws seeds with replacement \emph{within} each anchor before refitting; we report the bootstrap standard error and the $2.5$/$97.5$ percentile interval as the 95\% confidence interval.
The resulting bootstrap standard errors are $A_d \pm 0.04$, $\alpha \pm 0.01$, $A_r \pm 13.3$, and $\beta \pm 0.02$, and the 95\% bootstrap confidence intervals are $A_d \in [0.28,\, 0.42]$, $\alpha \in [0.09,\, 0.13]$, $A_r \in [68.1,\, 121]$, and $\beta \in [0.93,\, 0.99]$.


\end{document}